\title{\NAME: Regret Minimization for Monotonic Value Function Factorization in Multiagent Reinforcement Learning}
\author{%
	Yongsheng Mei$ ^* $ \\
	Department of Electrical and Computer Engineering\\
	The George Washington University\\
	Washington, DC\\
	\texttt{ysmei@gwu.edu} \\
	\And
	Hanhan Zhou$ ^* $ \\
	Department of Electrical and Computer Engineering\\
	The George Washington University\\
	Washington, DC\\
	\texttt{hanhan@gwu.edu} \\
	\AND
	Tian Lan \\
	Department of Electrical and Computer Engineering\\
	The George Washington University\\
	Washington, DC\\
	\texttt{tlan@gwu.edu} \\
}
\newtheorem{theorem}{Theorem}
\newtheorem{lemma}{Lemma}
\newtheorem{definition}{Definition}
\newtheorem{assumption}{Assumption}
\newcommand{\p}{\mathop{}\!{\partial}}
\newcommand{\T}{^\mathrm{T}} 
\newcommand{\qtot}{Q_{tot}}
\newcommand{\bu}{\mathbf{u}} 
\newcommand{\btau}{\bm{\tau}} 
\newcommand{\eqdef}{\stackrel{\mathrm{def}}{=}}
\newcommand{\NAME}{ReMIX\xspace}
\begin{document}

\maketitle

\def\thefootnote{*}\footnotetext{Equal Contribution.}

\begin{abstract}
	Value function factorization methods have become a dominant approach for cooperative multiagent reinforcement learning under a centralized training and decentralized execution paradigm. By factorizing the optimal joint action-value function using a monotonic mixing function of agents' utilities, these algorithms ensure the consistency between joint and local action selections for decentralized decision-making. Nevertheless, the use of monotonic mixing functions also induces representational limitations. Finding the optimal projection of an unrestricted mixing function onto monotonic function classes is still an open problem. To this end, we propose \NAME, formulating this optimal projection problem for value function factorization as a regret minimization over the projection weights of different state-action values. Such an optimization problem can be relaxed and solved using the Lagrangian multiplier method to obtain the close-form optimal projection weights. By minimizing the resulting policy regret, we can narrow the gap between the optimal and the restricted monotonic mixing functions, thus obtaining an improved monotonic value function factorization. Our experimental results on Predator-Prey and StarCraft Multiagent Challenge environments demonstrate the effectiveness of our method, indicating the better capabilities of handling environments with non-monotonic value functions.
\end{abstract}

\section{Introduction}
\label{sec:introduction}

Reinforcement learning has demonstrated great potential in solving challenging real-world problems, from autonomous driving~\citep{cao2012overview, hu2019interaction} to robotics and planning~\citep{matignon2012coordinated,levine2016end, huttenrauch2017guided}. In many scenarios, these tasks involve multiple agents within the same environment and thus require multiagent reinforcement learning (MARL)~\citep{vinyals2019grandmaster,jaques2019social,baker2019emergent,wang2020roma} to coordinate agents and learn desired behaviors from their experiences. Due to practical communication constraints and the need to cope with vast joint action space, MARL algorithms often leverage fully decentralized policies but learn them in a centralized fashion with access to additional information during training. Value function factorization methods, e.g., QMIX~\citep{rashid2018qmix}, QPLEX~\citep{wang2020qplex}, Qatten~\citep{yang2020qatten}, FOP~\citep{zhang2021fop}, and DOP~\citep{wang2020dop}, have been a dominant approach for such centralized training and decentralized execution (CTDE) MARL~\citep{kraemer2016multi}. By factorizing the optimal joint action value function using a monotonic mixing function of per-agent utilities, these algorithms ensure the consistency between joint and local action selections for decentralized decision-making. Superior performance has been reported in many MARL tasks, such as the StarCraft Multiagent Challenge (SMAC)~\citep{samvelyan2019starcraft}. 

It is known that value function factorization can be viewed as an operator~\citep{dugas2009incorporating}, which first computes the optimal joint action value functions as targets and then projects them onto the space representable by monotonic function classes. The projected monotonic mixing functions enable efficient maximization yet allow decentralized decision-making. However, it also poses representational limitations. For instance, QMIX leverages a universal approximator for non-linear monotonic mixing functions. It prevents QMIX from efficiently representing joint action value functions where agents' orderings of their action choices depend on each other~\citep{mahajan2019maven}. Later, the authors in the paper~\citep{rashid2020weighted} proposed an improved projection using Weighted QMIX (WQMIX). It assigns higher weights to the values of optimal joint actions than the suboptimal ones, resulting in a better projection that more accurately represents these optimal values. However, WQMIX relies purely on a heuristic design -- such as Centrally-Weighted (CW) and Optimistically-Weighted (OW) -- where such weight term is a constant. Finding an optimal projection onto the monotonic function class is still an open problem. 

To this end, we propose \NAME, formulating the optimal projection problem for value function factorization as a regret minimization over the projection weights of different state-action values. Specifically, we construct an optimal policy following the optimal joint action-value function and a restricted policy using its projection onto monotonic mixing functions. A policy regret is then defined as the difference between the expected discounted reward of the optimal policy and that of the restricted policy. By minimizing such policy regret through an upper bound, we can narrow the gap between the optimal and restricted policies and thus force the projected monotonic value function to approach the optimal one during learning, leading to an optimal monotonic factorization with minimum regret. We note that while policy regret minimization has been employed to formulate various optimizations in reinforcement learning, such as optimal prioritized experience replay~\citep{liu2021regret} and loss function design~\citep{jin2018regret}, to the best of our knowledge, this is the first proposal for optimizing value function factorization in MARL through policy regret minimization. 

We show that the proposed regret minimization can be solved via the Lagrangian method~\citep{bertsekas2014constrained} considering an upper bound. By examining a weighted Bellman equation involving monotonic mixing functions and per-agent critics, we leverage the implicit function theorem~\citep{krantz2002implicit} and derive Karush–Kuhn–Tucker (KKT)~\citep{ghojogh2021kkt} conditions to find the optimal projection weights in closed form. Our results highlight the key principles contributing to optimal monotonic value function factorization. The optimal projection weights can be interpreted to consist of four components: Bellman error, value underestimates, the gradient of the monotonic mixing function, and the on-policiness of available transitions. We note that the first two terms relating to Bellman error and value underestimates are consistent with the weighting heuristics proposed in WQMIX, thus providing a quantitative justification and recovering WQMIX as a special case. More importantly, our analysis reveals that an optimal value function factorization should also depend on the gradient of the monotonic mixing function and the positive impact of more current transitions.

Following the theoretical results, we provide a tractable approximation of the optimal projection weights and propose a MARL algorithm of \NAME with regret-minimizing monotonic value function factorization. We validate the effectiveness of \NAME in Predator-Prey~\citep{bohmer2020deep} and SMAC. Compared with state-of-the-art factorization-based MARL algorithms (e.g., WQMIX, QPlex, FOP, DOP), \NAME is shown to better cope with environments with non-monotonic value functions, resulting in improved convergence and superior empirical performance. 

The main contributions of our work are as follows:
\begin{itemize}
	\item We propose a novel method, \NAME, formulating the optimal value function factorization as a policy regret minimization and solving the weights of the optimal projection in closed form.
	\item The theoretical results and tractable weight approximations of \NAME enable cooperative MARL algorithms with improved value function factorization.
	\item Experiment results of \NAME in Predator-Prey and SMAC environments demonstrate superior convergence and empirical performance over state-of-the-art factorization-based methods. We further perform ablation studies to demonstrate the contribution of each component in our design.
\end{itemize}

\section{Background}
\label{sec:background}

\subsection{Partially Observable Markov Decision Process}

We describe a fully cooperative multiagent sequential decision-making task as a decentralized partially observable Markov decision process (Dec-POMDP)~\citep{oliehoek2016concise} consisting of a tuple $ G=\langle S,U,P,R,Z,O,n,\gamma \rangle $, where $ s \in S $ describes the global state of the environment. At each time step, each agent $ a \in A \equiv \{ 1,\dots,n \} $ selects an action $ u^a \in U $, and all selected actions are combined to form a joint action $ \mathbf{u} \in \mathbf{U} \equiv U^n $. This process leads to a transition in the environment based on the state transition function $ P(s'|s,\mathbf{u}):S \times \mathbf{U} \times S \rightarrow [0,1] $. All agents share the same reward function $ r(s,\mathbf{u}):S \times \mathbf{U} \rightarrow \mathbb{R} $ with a discount factor $ \gamma \in [0,1) $.

In the partially observable environment, the agents' individual observations $ z \in Z $ are generated by the observation function $ O(s,u):S \times A \rightarrow Z $. Each agent has an action-observation history $ \tau_a \in T \equiv (Z \times U)^*$. Conditioning on the history, the policy becomes $ \pi^a(u_a|\tau_a):T \times U \rightarrow [0,1] $. The joint policy $ \pi $ has a joint action-value function: $ Q^\pi(s_t, \mathbf{u}_t)=\mathbb{E}_{s_{t+1:\infty},\mathbf{u}_{t+1:\infty}}[R_t|s_t,\mathbf{u}_t] $, where $ t $ is the timestep and $R_t=\sum_{i=0}^{\infty} \gamma^i r_{t+i}$ is the discounted return. In this paper we adopt the centralized training and decentralized execution paradigm: the learning algorithm has access to all local action-observation histories $ \btau $ and global state $ s $ during training while each agent can only access its own action-observation history in execution.

\subsection{Policy Regret}

The object of MARL is to find a joint policy $ \pi $ that can maximize the expected return: $ \eta(\pi)=\mathbb{E}_{\pi}[\sum_{i=0}^{\infty} \gamma^i r_{t+i}] $. For a fixed policy, the Markov decision process becomes a Markov reward process, where the discounted stationary state distribution is defined as $ d^{\pi}(s) $. Considering the partially observable scenario of MARL, we replace the state in discounted state distribution with agents' action observation histories\footnote{Decentralized MARL problems inherently follow POMDPs, where history-based functions and distributions will reflect the impact of partial observability.}, i.e., $ d^{\pi}(\btau) $. Similarly, the discounted history action distribution is defined as $ d^{\pi}(\btau,\mathbf{u})=d^{\pi}(\btau)\pi(\bu|\btau) $. Then, we will have the expected return rewritten as $ \eta(\pi)=\frac{1}{1-\gamma} \mathbb{E}_{d^{\pi}(\btau,\mathbf{u})}[r(s,\bu)] $.

We assume there exists an optimal joint policy $ \pi^* $ such that $\pi^*=\arg\max_{\pi}\eta(\pi) $. The regret of the joint policy $ \pi $ is defined as $ \textrm{regret}(\pi)=\eta(\pi^*)-\eta(\pi) $. The policy regret measures the expected loss when following the current policy $ \pi $ instead of optimal policy $ \pi^* $. Since $ \eta(\pi^*) $ is a constant, minimizing the regret is consistent with maximizing of expected return $ \eta(\pi) $. In this paper, we use regret as an alternative optimization objective for finding the optimal projection in MARL, along with multiple constraints, e.g., the Bellman equation and the sum of projection weights. By minimizing the regret, the current policy $ \pi_k $ following a monotonic value factorization will approach the optimum $ \pi^* $ following an unrestricted value function.

\section{Related Work}
\label{sec:related}

\subsection{Value Decomposition Approaches}

Value decomposition approaches~\citep{guestrin2002coordinated, castellini2019representational, zhou2022value, zhou2022pac} are widely used in value-based MARL. Such methods integrate each agent's local action-value functions through a learnable mixing function to generate global action values. For instance, VDN~\citep{sunehag2017value} and QMIX estimate the optimal joint action-value function $ Q^* $ as $ \qtot $ with different formations. VDN aims to learn a joint action-value function $ \qtot $ of the sum of individual utilities for each agent. QMIX calculates $ \qtot $ by combining mentioned utilities via a continuous state-dependent monotonic function, generated by a feed-forward mixing network with non-negative weights. QTRAN~\citep{son2019qtran} and QPLEX further extend the class of value functions that can be represented. Besides value-based factorization algorithms, some works extend the value decomposition method to policy-based actor-critic algorithms. In VDAC~\citep{su2021value}, a factorized actor-critic framework compatible with A2C can obtain a reasonable trade-off between training efficiency and algorithm performance. Recently proposed FOP~\citep{zhang2021fop} provides a new way to factorize the optimal joint policy induced by maximum-entropy MARL into individual policies. DOP~\citep{wang2020dop} addresses the issue of centralized-decentralized mismatch and credit assignment in both discrete and continuous action spaces in the multiagent actor-critic framework. In this paper, we recast the problem of projecting an unrestricted value function onto monotonic function classes as a policy regret minimization, whose solution allows us to find the optimal projection weights to obtain an improved value function factorization. 

\subsection{Weighting Scheme in WQMIX}

QMIX restricts the joint action-value function to be a monotonic mixing of agents' utilities, such that $ \qtot(\btau,\mathbf{u})=f_s(Q^1(\tau_1,u_1),\dots,Q^n(\tau_n,u_n)) $ where $ \frac{\partial f_s}{\partial Q^a} \geq 0, \forall a \in A \equiv {1,\dots,n}$, preventing it from projecting non-monotonic joint action representation. WQMIX solved the limitation by introducing the weights into the projection to retrieve the optimal policy. The WQMIX algorithms - OW and CW QMIXs - can place more importance on the better $ \qtot $ in minimizing the loss: $ \sum_{i=1}^{b}w(\btau,\mathbf{u})(\qtot(\btau,\mathbf{u};\theta)-\bar{y}_i)^2 $, where $ \bar{y}_i=r + \gamma \hat{Q}^*(\btau', \arg\max_{\mathbf{u}'}\qtot(\btau', \mathbf{u}';\theta^-)) $ is the fixed target, $ \hat{Q}^* $ is the unrestricted joint action-value function, and $ w $ is the weighting function\footnote{WQMIX defines the weight as $ w(s,\bu) $. Considering Dec-POMDP with the CTDE paradigm, $ w(s,\bu) $ is equivalent to $ w(\btau, \bu) $.}. For example, in OW, the $ w $ is given by:
\begin{equation}
	w(\btau,\mathbf{u})=
	\begin{cases}
		1 &\qtot(\btau, \mathbf{u}) < \bar{y}_i \\
		\alpha &\textrm{otherwise}.
	\end{cases}
	\label{eq:ow}
\end{equation}

When a transition is overestimated in the OW paradigm, it will be assigned with a constant weight $ \alpha \in (0,1] $. Compared to OW, CW has a similar mechanism but assigns weights to a transition whose joint action $ \mathbf{u} $ is not the best. We note that while insightful, these methods are based on heuristic designs of projection weights. Finding optimal projection weights for monotonic value function factorization is still an open problem. In this paper, we reformulate the problem as a policy regret minimization and solve the optimal projection weights in closed form by relaxing the objective and the Lagrangian method.

\section{Optimal Projection onto Monotonic Value Functions}
\label{sec:strategy}

\subsection{Problem Formulation as Regret Minimization}

Let $ Q^* $ be the unrestricted joint action value function and $ \qtot = f_s(Q^1(\tau_1,u_1),\dots,Q^n(\tau_n,u_n))$ be its estimation obtained through a monotonic mixing function $ f_s(\cdot) $ of per-agent utilities $Q^a(\tau_i,u_i)$ for $a=1,\ldots,n$. For simplicity of notations, we use $ Q_k $ to denote $ \qtot $ at step $ k $. Adopting $ \mathcal{B}^*Q^*_{k-1} $ as the target with a Bellman operator $\mathcal{B}^*$, we update $ Q_k $ in tandem using a weighted Bellman equation: $Q_k=\arg\min_{Q \in \mathcal{Q}} \mathbb{E}_{\mu}[w_k(\btau,\mathbf{u})(Q-\mathcal{B}^*Q^*_{k-1})^2(\btau,\mathbf{u})]$, where $w_k(\btau,\mathbf{u})$ are non-negative projection weights for different transitions that need to be optimized. This projects the unrestricted value function onto a monotonic function class $Q \in \mathcal{Q}$. 

To formulate the policy regret with respect to this projection, we consider a Boltzmann policy $\pi_k$ following the agent's individual utilities $ Q^a_k $ obtained from such monotonic value factorization, i.e., $ \pi_k=[\pi^1_k,...,\pi^n_k]\T $ and $ \pi^a_k={e^{Q^a_k(\tau_a,u_a)}}/[{\sum_{\tau_a,u_a'} e^{Q^a_k(\tau_a,u_a')}}] $, as well as a similar policy $\pi^*$ following the unrestricted value function  $ Q^* $ that is defined over joint actions in the Boltzmann manner. Our objective is to minimize the policy regret $ \eta(\pi^*)-\eta(\pi) $ over non-negative projection weights under relevant constraints, i.e., 
\begin{equation}
	\begin{aligned}
		\min_{w_k} \quad &\eta(\pi^*)-\eta(\pi_k) \\
		\textrm{s.t.} \quad &Q_k=\arg\min_{Q \in \mathcal{Q}} \mathbb{E}_{\mu}[w_k(\btau,\mathbf{u})(Q-\mathcal{B}^*Q^*_{k-1})^2(\btau,\mathbf{u})], \\
		&\mathbb{E}_{\mu}[w_k(\btau,\mathbf{u})]=1, \quad w_k(\btau,\mathbf{u}) \ge 0, \\
		&Q_k(\btau,\mathbf{u})=f_s(Q^1(\tau_1,u_1),\dots,Q^n(\tau_n,u_n)),
	\end{aligned}
	\label{eq:obj_1}
\end{equation}
where $ \pi^* $ and $\pi_k $ are policies in the Boltzmann fashion following the unrestricted and monotonic value functions, respectively. The projection weights must sum up to 1, and $ \mu $ is the data distribution that we sample data from the replay buffer. An additional table to summarize and explain the all given notations is provided in Appendix~\ref{subsec:notation}.

\subsection{Solving Optimal Projection Weights}

The solution to this optimization problem relies on the monotonic function $ f_s(\cdot) $ represented by a mixing network, which takes the state and agent networks' output $ Q^a_k $ as inputs and generates an estimate of joint value function $ \qtot $. Solving the regret minimization problem through the Lagrangian method requires analyzing the KKT conditions. Thus, we first find the first-order derivative of the monotonic mixing network, which will also be leveraged to find an optimal solution. The mixing network is a universal approximator consisting of a two-layer network of non-negative weight~\citep{dugas2009incorporating}. We compute its first-order derivative in the following lemma.

\begin{lemma}
	Considering a two-layer mixing network of the weight matrix $ W_1, W_2 $, bias $ b_1, b_2 $ and activation function $ h(\cdot) $, the derivative of $ \qtot $ over one of the local utilities $ Q^a $ is:
	\begin{equation*}
		f_{s,Q^a}'=\frac{\p \qtot}{\p Q^a}=h_{Q^a}'(\vec{Q}\T W_1+b_1)\sum_{j=1}^{m}w^1_{aj}w^2_j,
	\end{equation*}
	where $ \vec{Q}=[Q^1,\dots,Q^n]\T $. $ W_1, W_2 $ are the $ n \times m $ and $ 1 \times m $ matrix correspondingly, with the respective elements $ w^1_{ij} $ and $ w^2_j $ in each matrix. $ n $ is the agent number, and $ m $ is the width of the mixing network.
	\label{lem:fprime}
\end{lemma}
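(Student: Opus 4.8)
The plan is to treat the two-layer mixing network explicitly as a composition of two affine maps with a pointwise nonlinearity in between, and then differentiate by the chain rule, being careful only about the index bookkeeping. Concretely, with $\vec{Q}=[Q^1,\dots,Q^n]\T$ the vector of per-agent utilities, the first layer produces a pre-activation vector $z = \vec{Q}\T W_1 + b_1 \in \mathbb{R}^{m}$ with components $z_j = \sum_{i=1}^{n} Q^i w^1_{ij} + b_1^{(j)}$; the hidden layer outputs $h(z)$ componentwise; and the second layer produces the scalar $\qtot = \sum_{j=1}^m h(z_j)\, w^2_j + b_2$. (I would note in passing that monotonicity is exactly the requirement $w^1_{ij}\ge 0$ and $w^2_j\ge 0$, together with $h$ nondecreasing, but that is not needed for the derivative computation itself.)

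First I would compute $\partial \qtot/\partial Q^a$ by the chain rule through the sum over hidden units: $\frac{\partial \qtot}{\partial Q^a} = \sum_{j=1}^m w^2_j\, h'(z_j)\, \frac{\partial z_j}{\partial Q^a} = \sum_{j=1}^m w^2_j\, h'(z_j)\, w^1_{aj}$, since $\partial z_j/\partial Q^a = w^1_{aj}$. Then I would observe that the stated form in the lemma writes the common factor $h'_{Q^a}(\vec{Q}\T W_1 + b_1)$ outside the sum; this is just the notational convention of the paper where $h'_{Q^a}$ denotes the derivative of the activation evaluated at the relevant pre-activation, and for the purpose of matching the lemma I would either (i) read $h$ as applied to the whole vector so that $h'$ is understood as acting appropriately per hidden unit, or (ii) point out that under the (standard for such universal approximators) single-hidden-activation convention the factor $h'(z_j)$ is the same across $j$ up to the argument and can be pulled out, yielding exactly $h'_{Q^a}(\vec{Q}\T W_1 + b_1)\sum_{j=1}^m w^1_{aj} w^2_j$.

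The main obstacle here is not mathematical depth but reconciling the paper's compressed notation $h'_{Q^a}(\vec Q\T W_1 + b_1)$ with the honest chain-rule expression, which a priori has the activation derivative $h'(z_j)$ living inside the sum over $j$. I would handle this by being explicit that $h(\cdot)$ in the mixing network is applied elementwise and that $h'_{Q^a}(\cdot)$ is shorthand for the vector/Jacobian-diagonal of activation derivatives contracted with the layer weights; once that convention is spelled out, the identity $\frac{\partial \qtot}{\partial Q^a} = h'_{Q^a}(\vec{Q}\T W_1 + b_1)\sum_{j=1}^{m} w^1_{aj} w^2_j$ follows immediately and the lemma is proved. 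I would close by remarking that this derivative is exactly the quantity that will reappear in the KKT analysis as the "gradient of the monotonic mixing function" component of the optimal projection weights.
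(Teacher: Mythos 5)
Your proof takes essentially the same route as the paper's: write the network explicitly as $f_s(\vec{Q})=h(\vec{Q}\T W_1+b_1)W_2\T+b_2$ and differentiate by the chain rule. You are in fact more careful than the paper, whose proof simply asserts the final formula without the index bookkeeping: your intermediate expression $\sum_{j=1}^m w^2_j\,h'(z_j)\,w^1_{aj}$, with the activation derivative inside the sum over hidden units, is the literally correct one, and you are right that the lemma's displayed form only matches it under a notational convention (reading $h'_{Q^a}(\vec{Q}\T W_1+b_1)$ as the per-unit Jacobian diagonal contracted into the sum) rather than as a genuine factorization of a common scalar. The paper's own proof does not acknowledge this, so your explicit handling of it is a strict improvement rather than a gap.
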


\begin{proof}
	See Appendix~\ref{proof:lem_1}.
\end{proof}

Given that the monotonic mixing function is smooth and differentiable, we consider an upper bound of the regret objective (obtained using a relaxation and Jensen's inequality) and formulate its Lagrangian by introducing Lagrangian multipliers with respect to the constraints. It allows us to solve the proposed regret-minimization problem and obtain optimal projection weights in closed form (albeit with a normalization factor $ Z^* $). 

\begin{theorem}[Optimal weighting scheme]
	Under mild conditions, the optimal weight $ w_k(s,\mathbf{u}) $ to a relaxation of the regret minimization problem in~\eqref{eq:obj_1} with discrete action space is given by:
	\begin{equation}
		w_k(\btau,\mathbf{u})=\frac{1}{Z^*}(E_k(\btau,\mathbf{u})+\epsilon_k(\btau,\mathbf{u})),
		\label{eq:w_k}
	\end{equation}
	where when $ Q_k \le \mathcal{B}^*Q^*_{k-1} $, we have
	\begin{equation*}
		\begin{aligned}
			E_k(\btau,\mathbf{u})=&\frac{d^{\pi_k}(\btau,\mathbf{u})}{\mu(\btau,\mathbf{u})}(\mathcal{B}^*Q^*_{k-1}-Q_k)\exp(Q^*_{k-1}-Q_k)\left(\sum_{j=1}^{n}\frac{1-\pi^j}{f'_{s,Q^j}}-1\right),
		\end{aligned}
	\end{equation*}
	and otherwise (i.e., when  $ Q_k > \mathcal{B}^*Q^*_{k-1} $), we have
	\begin{equation*}
		E_k(\btau,\mathbf{u})=0,
	\end{equation*}
	where $ Z^* $ is the normalization factor, and $\epsilon_k(\btau,\mathbf{u}))$ is a negligible term when the probability of reversing back to the visited state is small, or the number of steps agents take to revisit a previous state is large.
	\label{theo:weight}
\end{theorem}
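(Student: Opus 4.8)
The plan is to turn the constrained problem~\eqref{eq:obj_1} into a stationarity condition that can be solved in closed form, after first replacing the regret by a smooth upper bound that is a differentiable functional of the weights $w_k$. Since $\eta(\pi^*)$ is constant, the whole difficulty is in understanding how $\eta(\pi_k)$ --- equivalently $\frac{1}{1-\gamma}\mathbb{E}_{d^{\pi_k}(\btau,\bu)}[r(s,\bu)]$ --- depends on $w_k$ through the chain $w_k \mapsto Q_k \mapsto (Q^1_k,\dots,Q^n_k) \mapsto \pi_k \mapsto \eta(\pi_k)$, and then to optimize over this dependence subject to $\mathbb{E}_{\mu}[w_k]=1$ and $w_k\ge 0$.

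First I would upper-bound the policy regret. By the performance-difference identity together with the fact that $Q_k$ is trained toward the unrestricted target $\mathcal{B}^*Q^*_{k-1}$, the regret is controlled by the value underestimate $\mathcal{B}^*Q^*_{k-1}-Q_k$ measured along the on-policy occupancy, which is where the importance ratio $d^{\pi_k}(\btau,\bu)/\mu(\btau,\bu)$ enters. Because both $\pi_k$ and $\pi^*$ are Boltzmann, their likelihood ratio is governed by $\exp(Q^*_{k-1}-Q_k)$; pulling this ratio out and applying Jensen's inequality (as flagged before the theorem) yields a relaxed, smooth bound $\overline{\mathrm{reg}}(w_k)$ in which the Bellman-error, value-underestimate, and on-policiness factors are all already visible.

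Next I would make the dependence $Q_k=Q_k(w_k)$ explicit. The projection constraint is the first-order stationarity $\mathbb{E}_{\mu}[w_k\,(Q_k-\mathcal{B}^*Q^*_{k-1})\,\nabla Q_k]=0$ of the weighted least-squares objective; differentiating this identity via the implicit function theorem gives $\partial Q_k/\partial w_k$ in terms of the Bellman residual. To push this derivative down to the per-agent utilities that determine $\pi^a_k$, I would invert the mixing-network Jacobian using Lemma~\ref{lem:fprime}, which is precisely where the factors $1/f'_{s,Q^j}$ appear; combining with $\partial\log\pi^j_k/\partial Q^j = 1-\pi^j$ from the Boltzmann form and summing over agents produces the bracketed term $\sum_{j=1}^{n}(1-\pi^j)/f'_{s,Q^j}-1$. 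Chaining through $\eta(\pi_k)$ then gives $\partial\overline{\mathrm{reg}}/\partial w_k$ in closed form.

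Finally I would form the Lagrangian $\overline{\mathrm{reg}}(w_k)+\lambda\big(\mathbb{E}_{\mu}[w_k]-1\big)-\mathbb{E}_{\mu}[\nu_k w_k]$ for the equality and non-negativity constraints, write the KKT stationarity and complementary-slackness conditions, and solve for $w_k$. Stationarity gives $w_k\propto E_k+\epsilon_k$; the non-negativity multiplier $\nu_k$ forces the case split, so that on transitions where the unconstrained solution would be negative --- exactly $Q_k>\mathcal{B}^*Q^*_{k-1}$ --- the constraint binds and $E_k=0$, while elsewhere $E_k$ has the stated form, and $\mathbb{E}_{\mu}[w_k]=1$ pins down the normalizer $Z^*$. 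The residual $\epsilon_k$ gathers the higher-order terms of the implicit-function expansion, in particular the part of $\partial d^{\pi_k}/\partial w_k$ arising from trajectories that return to an already-visited history, which I would bound by the revisit probability times a geometric factor in the revisit horizon --- negligible under the ``mild conditions'' of the statement. I expect the main obstacle to be this third step: differentiating the composition (weighted Bellman projection) $\circ$ (monotonic mixing) $\circ$ (Boltzmann map) $\circ$ (return functional) while correctly tracking the implicit dependence of $d^{\pi_k}$ on $w_k$, and cleanly separating the leading term $E_k$ from the remainder $\epsilon_k$ whose negligibility must then be justified.
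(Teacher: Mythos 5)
Your proposal follows essentially the same route as the paper's proof: relax the regret via the performance-difference lemma and a Jensen bound with $g(x)=\exp(-x)$, form the Lagrangian over the (distribution-absorbed) weights, obtain $\partial Q_k/\partial w_k$ from the implicit function theorem applied to the stationarity of the weighted Bellman projection, chain through the mixing-network gradient of Lemma~\ref{lem:fprime} and the Boltzmann derivative to get the $\sum_j(1-\pi^j)/f'_{s,Q^j}-1$ factor, and use KKT complementary slackness for the case split, with $\epsilon_k$ absorbing the revisit-probability part of $\partial d^{\pi_k}/\partial w_k$. The only cosmetic difference is that you attribute the $\exp(Q^*_{k-1}-Q_k)$ factor to a Boltzmann likelihood ratio, whereas in the paper it falls out of differentiating the Jensen-relaxed $-\log\mathbb{E}[\exp(\cdot)]$ objective; the mechanism you invoke (Jensen with a convex exponential) is the same.
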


\begin{proof}
We give a sketch of the proof below and provide the complete proof in Appendix~\ref{proof:theo_1}. The derivation of optimal weights consists of the following major steps: (i) Use a relaxation and Jensen's inequality to obtain a more tractable upper bound of the regret objective for minimization. (ii) Formulate the Lagrangian for the new optimization problem and analyze its KKT conditions. (iii) Compute various terms in the KKT condition and, in particular, analyze the gradient of $ Q_k $ with respect to weights $ p_k $ (defined through the weighted Bellman equation) by leveraging the implicit function theorem (IFT). (iv) Derive the optimal projection weights in closed form by setting the Lagrangian gradient to zero and applying KKT and its slackness conditions.

{\em Step 1: Relaxing the objective and adopting Jensen's inequality.} To begin with, we replace the original optimization objective function, the policy regret, with a relaxed upper bound. This replacement can be achieved through the following inequality since both sides of the equation have the same minimum:
\begin{equation}
	\begin{aligned}
		\eta(\pi^*)-\eta(\pi_k) \le \mathbb{E}_{d^{\pi_k}(\btau)}[(Q^*_{k-1}-Q_k)(\btau,\mathbf{u}^*)] +\mathbb{E}_{d^{\pi_k}(\btau,\mathbf{u})}[(Q_k-Q^*_{k-1})(\btau,\mathbf{u})].
	\end{aligned}
	\label{eq:obj_2}
\end{equation}

The proof of this result is given in Appendix. The key idea is to rewrite the regret using the expectation of the action-value functions with respect to discounted distribution $ d^{\pi_k} $. After that, we adopt Jensen's inequality~\citep{mcshane1937jensen} to continue relaxing the intermediate objective function based on a convex function $ g(x)=\exp(-x) $. Thus, a new optimization objective generated from~\eqref{eq:obj_2} becomes:
\begin{equation}
	\begin{aligned}
		\min_{w_k} \quad -\log\mathbb{E}_{d^{\pi_k}(\btau)}[\exp(Q_k-Q^*_{k-1})(\btau,\mathbf{u}^*)] -\log\mathbb{E}_{d^{\pi_k}(\btau,\mathbf{u})}[\exp(Q^*_{k-1}-Q_k)(\btau,\mathbf{u})],
	\end{aligned}
	\label{eq:obj_3}
\end{equation}
where the constraints still hold for the new optimization objective.

{\em Step 2: Computing the Lagrangian.} In this step, we leverage the Lagrangian multiplier method to solve the new optimization problem in~\eqref{eq:obj_3}. For simplicity, we use $ p_k $ that absorbs the data distribution $ \mu $ into $ w_k $. The constructed Lagrangian is:
\begin{equation*}
	\begin{aligned}
		\mathcal{L}(p_k;\lambda,\nu)=&-\log\mathbb{E}_{d^{\pi_k}(\btau)}[\exp(Q_k-Q^*_{k-1})(\btau,\mathbf{u}^*)] \\
		&-\log\mathbb{E}_{d^{\pi_k}(\btau,\mathbf{u})}[\exp(Q^*_{k-1}-Q_k)(\btau,\mathbf{u})] \\
		&+\lambda(\sum_{\btau,\mathbf{u}}p_k-1)-\nu\T p_k,
	\end{aligned}
\end{equation*}
where $ p_k $ is the weight $ w_k $ multiplied by the data distribution $ \mu $, and $ \lambda,\nu $ are the Lagrange multipliers.

{\em Step 3: Computing the Gradients Required in the Lagrangian.} According to the first constraint in~\eqref{eq:obj_1}, the gradient $ \frac{\p Q_k}{\p p_k} $ can be computed via IFT given by:
\begin{equation*}
	\frac{\p Q_k}{\p p_k} = -[\mathrm{diag}(p_k)]^{-1}[\mathrm{diag}(Q_k-\mathcal{B}^*Q^*_{k-1})].
\end{equation*}

We also derive the gradient $ \frac{\p d^{\pi_k}(\btau,\mathbf{u})}{\p p_k} $ for solving the Lagrangian. The derivation details are given in the Appendix.

{\em Step 4: Deriving the Optimal Weight.} After having the equation for two gradients and an expression of the Lagrangian, we can compute the optimal $ p_k $ via an application of the KKT conditions, which needs to set the partial derivative of the Lagrangian equaling to zero, as $ \frac{\p \mathcal{L}(p_k;\lambda,\nu)}{\p p_k}=0 $, where the optimal weight $ w_k $ can be acquired from the $ p_k $.

\end{proof}

The theoretical results shed light on the key factors determining an optimal projection onto monotonic mixing functions. Specifically, the optimal projection weights consist of four components relating to Bellman error, value underestimation, the gradient of the monotonic mixing function, and the on-policiness of available transitions. We will interpret these four components next and develop a deep MARL algorithm through approximations of the optimal projection weights.

{\em Bellman error $ \mathcal{B}^*Q^*_{k-1}-Q_k $:}	$ Q_k $ is the estimation of the action-value function after the Bellman update. This term measures the distance between the estimation and the Bellman target. A large difference in this term means higher hindsight Bellman error. Due to the KKT slackness condition, our analysis indicates that the optimal projection weight is zero when $Q_k> \mathcal{B}^*Q^*_{k-1}$ is an overestimate of the target value, and otherwise, a higher weight should be assigned when $ Q_k $ is more underestimated.

{\em Value underestimation $ \exp(Q^*_{k-1}-Q_k)$:} If $ \qtot $ after the Bellman update at current step $ k $ is smaller than optimal $ Q^*_{k-1} $, it results in an underestimate. In this case, we will assign a higher weight (always larger than 1) to this transition, which is proportional to the exponential of this underestimation gap. In contrast, when overestimating (with a negative gap), the assigned weight becomes lower and always smaller than 1. This is important because an underestimate of function approximation may lead to a sub-optimal $ Q_k $ estimation and thus non-optimal action selections.

{\em Gradient of the mixing network $ \sum_{j=1}^{n}\frac{1-\pi^j}{f'_{s,Q^j}}-1 $:}
It turns out that the optimal projection weights also depend on the inverse of the gradient of the monotonic mixing function $ f_s(\cdot) $, which is a new result. Intuitively, the optimal projection weights would become higher when the monotonic mixing function is insensitive to underlying per-agent utility values (i.e., having a small, positive gradient). We view this result as a form of normalization with respect to different shapes of monotonic mixing function $ f_s(\cdot) $. In practical algorithms, we often use the two-layer mixing network with non-negative weights to approximate the monotonic function $ f_s(\cdot) $ to produce $ Q_k $. The parameters of the mixing network are updated every step, and the gradient value can be readily computed from these parameters. We have provided an instance regarding calculating the gradient of a two-layer mixing network in Lemma~\ref{lem:fprime}. It is worth noting that similar gradients can also be obtained for other value function factorization methods.

{\em Measurement of on-policy transitions $ \frac{d^{\pi_k}(\btau,\mathbf{u})}{\mu(\btau,\mathbf{u})} $: }	
The efficient update of the joint action value function can be achieved by focusing on transitions that are more possibly to be visited by the current policy, i.e., with a higher $d^{\pi_k}(\btau,\mathbf{u})$. Adding this term can speed up the search for the optimal $ Q_k $ close to $ Q^*_{k-1} $.

\subsection{Proposed Algorithm}

\begin{algorithm}
	\caption{\NAME}
	\label{alg:remix}
	\begin{algorithmic}[1]
		\STATE Initialize step, the parameters of mixing network, agent networks, and hyper-network.
		\STATE Set the learning rate $ \alpha $ and replay buffer $ \mathcal{D} $
		\STATE let $ \theta^- = \theta $
		\FOR{$ \text{step} =1:\text{step}_{max} $ }
		\STATE $ k=0, s_0 = $ initial state
		\WHILE{$ s_k \neq $ terminal and $ k< $ episode limit}
		\FOR{each agent $ a $}
		\STATE $ \tau^a_k = \tau^a_{k-1} \cup {(o_k, u_{k-1})} $
		\STATE $ u^a_k = 
		\begin{cases}
			\arg\max_{u^a_k}Q(\tau^a_k,u^a_k) & \text{with probability } 1 - \epsilon \\
			\mathrm{randint}(1,|U|) & \text{with probability } \epsilon
		\end{cases} $
		\ENDFOR
		\STATE Obtain the reward $ r_k $ and next state $ s_{k+1} $
		\STATE Store the current trajectory into replay buffer $ \mathcal{D} = \mathcal{D} \cup {(s_k,\bu_k, r_k, s_{k+1})} $
		\STATE $ k = k + 1, \text{step}=\text{step}+1 $
		\ENDWHILE
		\STATE Collect $ b $ samples from the replay buffer $ \mathcal{D} $ following uniform distribution $ \mu $.
		\FOR{each timestep $ k $ in each episode in batch $ b $}
		\STATE Evaluate $ Q_k $, $ Q^* $ and target values
		\STATE Obtain the utilities $ Q_a $ from agents' local networks, and compute the individual policy $ \pi_k^a $
		\STATE Compute the weight: \\ $ w_k \propto
		\begin{cases}
			(\mathcal{B}^*Q^*_{k-1}-Q_k)\exp(Q^*_{k-1}-Q_k)\left(\sum_{j=1}^{n}\frac{1-\pi^j}{f'_{s,Q^j}}-1\right) & \text{when } Q_k \le \mathcal{B}^*Q^*_{k-1} \\
			0 & \text{when } Q_k > \mathcal{B}^*Q^*_{k-1}
		\end{cases} $
		\ENDFOR
		\STATE Minimize the Bellman error for $ Q_k $ weighted by $ w_k $, update the network parameter $ \theta $: \\ $ \theta = \theta - \alpha(\nabla_\theta \frac{1}{b} \sum_{i}^{b}w_k(Q_k -y_i)^2) $.
		\IF{update-interval steps have passed}
		\STATE $ \theta^-=\theta $
		\ENDIF
		\ENDFOR
	\end{algorithmic}
\end{algorithm}

Our analytical results in Theorem~\ref{theo:weight} identify four key factors determining the optimal projection weights. Interestingly, the first two terms, relating to Bellman error and value underestimation, recover the heuristic designs in WQMIX. Specifically, when the Bellman error of a particular transition is high, which indicates a wide gap between $ Q_k $ and $ Q^*_{k-1} $, we may consider assigning a larger weight to this transition. Similarly, value underestimation works as a correction term for incoming transitions: based on the difference of current $ Q_k $ and ideal $ Q^*_{k-1} $, it will compensate the underestimated $ Q_k $ with larger importance while penalizing overestimated $ Q_k $ with a smaller weighting modifier, consistent with OW scheme in~\eqref{eq:ow}.

Additionally, our analysis identifies two new terms: the gradient of the monotonic mixing function and measurement of on-policy transitions, which are crucial in obtaining an optimal projection onto monotonic value function factorization. As discussed, we interpret the gradient term in optimal weights as a form of normalization -- by increasing the weights for transitions, where the monotonic mixing function is less sensitive to the underlying per-agent utility, and decreasing the weights otherwise. The measurement of on-policy transitions in the weighting expression emphasizes the useful information carried by more current, on-policy transitions. 

Following these theoretical results, we provide a tractable approximation of the optimal projection weights and propose a MARL algorithm, \NAME, with regret-minimizing projections onto monotonic value function factorizations. The procedure of \NAME can be found in Algorithm~\ref{alg:remix}. We consider a new loss function with respect to the optimal projection weights $ w_k $ applied to the Bellman equation of $ Q_k $ (considering $ \qtot $ at step $ k $), i.e.,
\begin{equation}
	L_{\rm \NAME} = \sum_{i=1}^{b}\left[w_i(\btau,\mathbf{u})(Q_k-y_i)^2(\btau,\mathbf{u})\right],
\end{equation}
where $ b $ is the batch size, and $ y_i=\mathcal{B}^*Q^*_{k-1} $ is a fixed target using an unrestricted joint action-value function that can be approximated using a separate network similar to WQMIX. 

To compute the projection weights for Bellman error and value underestimation terms, we again leverage the unrestricted joint action-value function $ Q^* $ to compute them quantitatively. We note that the Bellman error term also works as the condition in Theorem~\ref{theo:weight} for deciding whether the weight should be zero. The gradient of the monotonic mixing network can be directly computed using Lemma~\ref{lem:fprime}. Ideally, we would also want to include measurement of on-policy transitions term in the calculation, but it is not readily available since distribution $ d^{\pi_k}(\btau,\mathbf{u}) $ in the numerator is difficult to acquire. Thus, we take an approach similar to existing work~\citep{kumar2020discor} and show that the other terms in the derived optimal weights are enough to provide a good estimate and lead to performance improvements. To account for the unknown normalization factor $Z^*$ and improve the stability of the training process, we map the projection weights to a given range, which is modeled as a hyperparameter of our algorithm. We provide numerical results adjusting it in the experiment section.

\section{Experiment}
\label{sec:experiment}

In this section, we present our experimental results on Predator-Prey and SMAC and demonstrate the effectiveness of \NAME by comparing the results with several state-of-the-art MARL baselines. Besides, we visualize the optimal weight pattern in heat maps to show the step-wise weight assignment for each transition. Additionally, we conduct the ablation experiments by disabling each term in Theorem~\ref{theo:weight}, and deliver the sensitivity experiments regarding the normalization factor. More details about the environment and hyper-parameter setting are provided in Appendix~\ref{exp:env}. The code of this work is available on GitHub (see supplementary files during the review period).

\subsection{Predator-Prey}

\begin{figure*}[ht!]
	\centering
	\begin{subfigure}[ht]{0.49\textwidth}
		\centering
		\includegraphics[width=\textwidth]{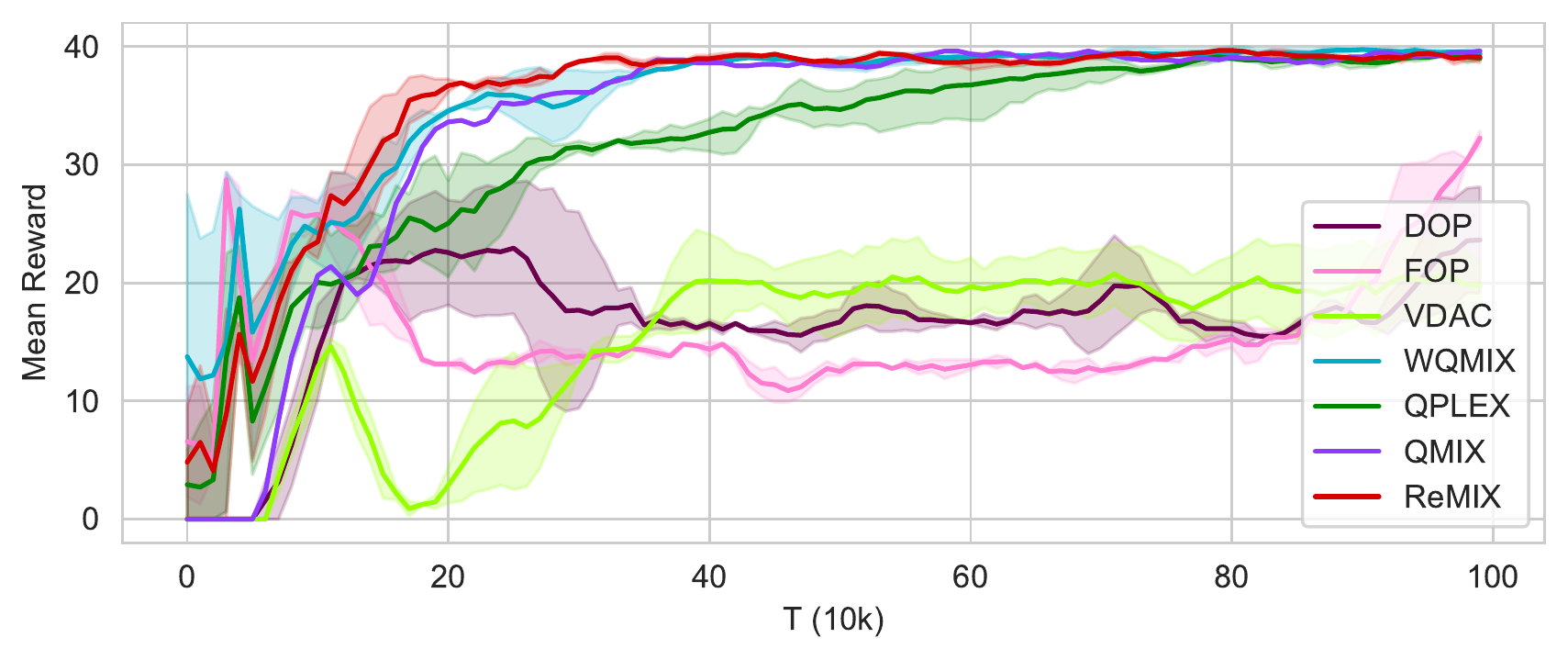}
		\caption{\small No punishment}
		\label{fig:pp_0}
	\end{subfigure}
	\begin{subfigure}[ht]{0.49\textwidth}
		\centering
		\includegraphics[width=\textwidth]{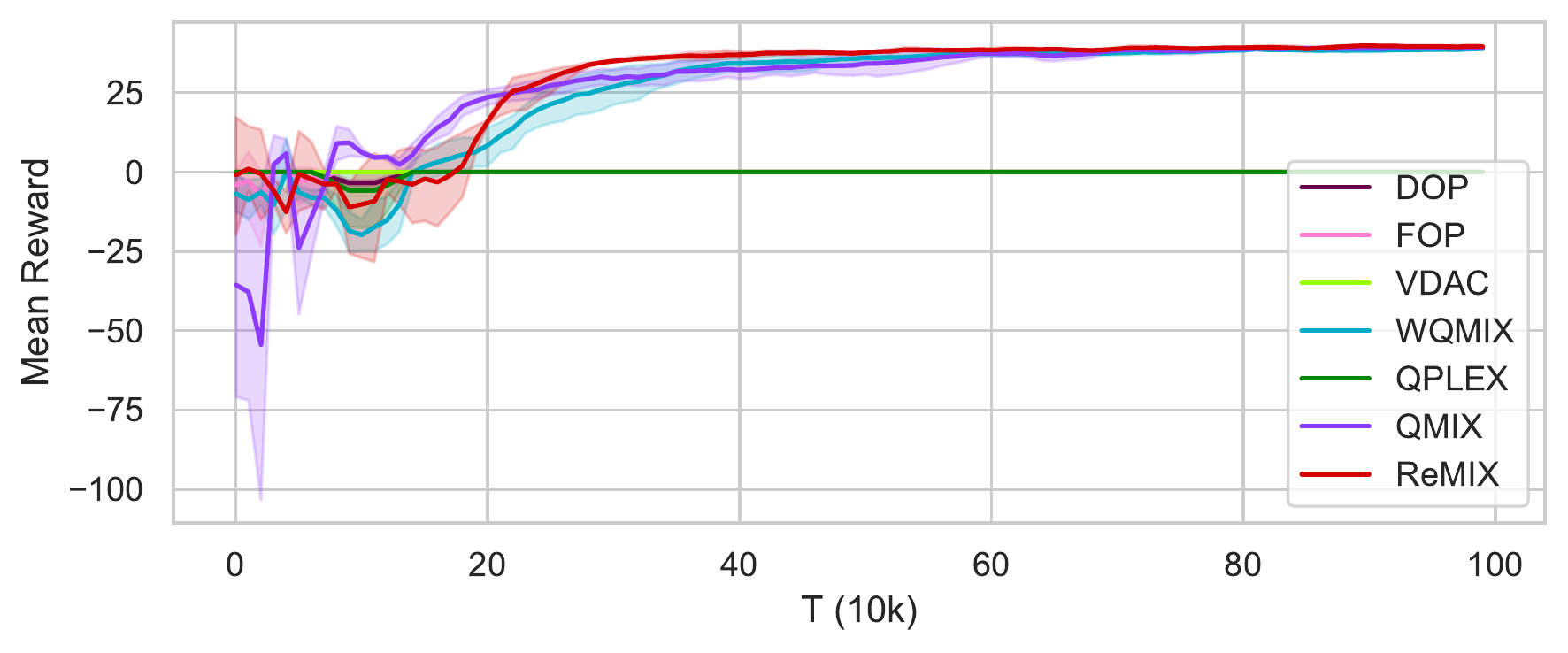}
		\caption{\small Punishment $ = -0.5 $}
		\label{fig:pp_0.5}
	\end{subfigure}
	\begin{subfigure}[ht]{0.49\textwidth}
		\centering
		\includegraphics[width=\textwidth]{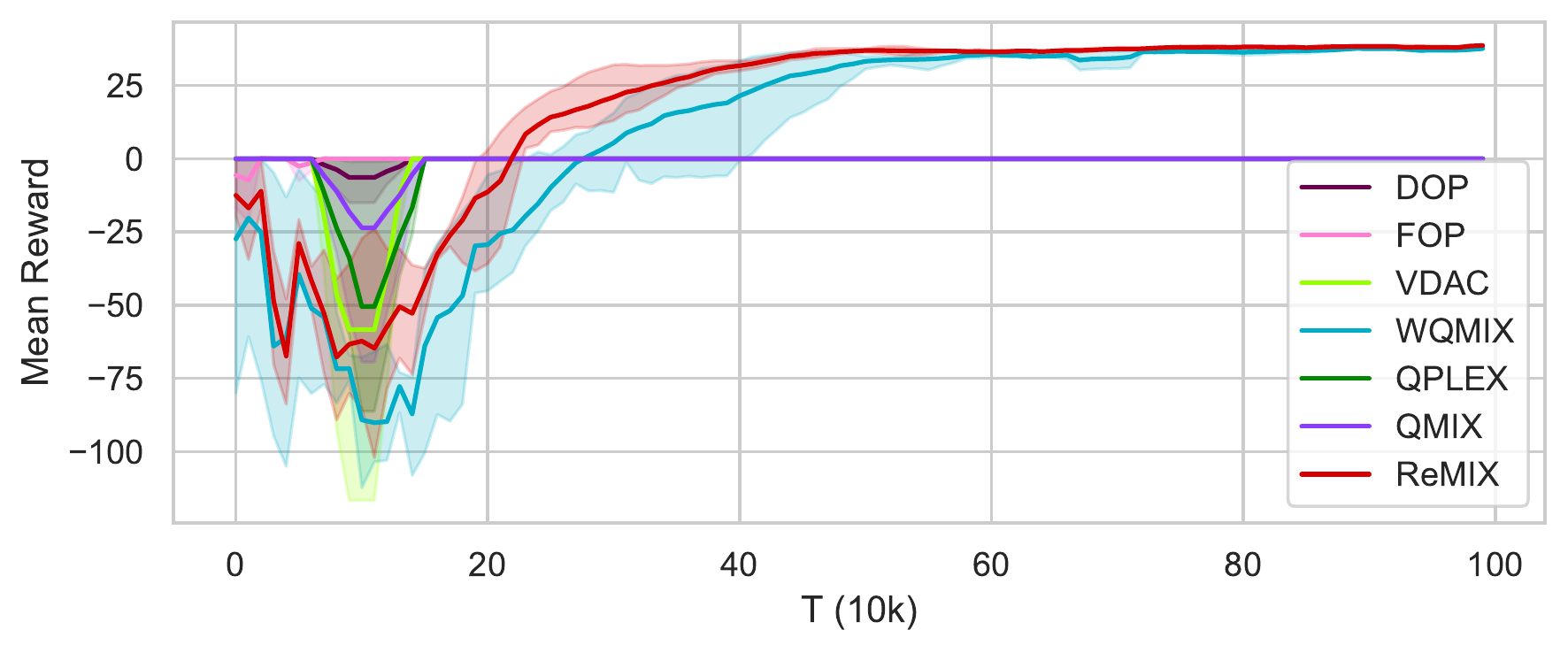}
		\caption{\small Punishment $ = -1.5 $}
		\label{fig:pp_1.5}
	\end{subfigure}
	\begin{subfigure}[ht]{0.49\textwidth}
		\centering
		\includegraphics[width=\textwidth]{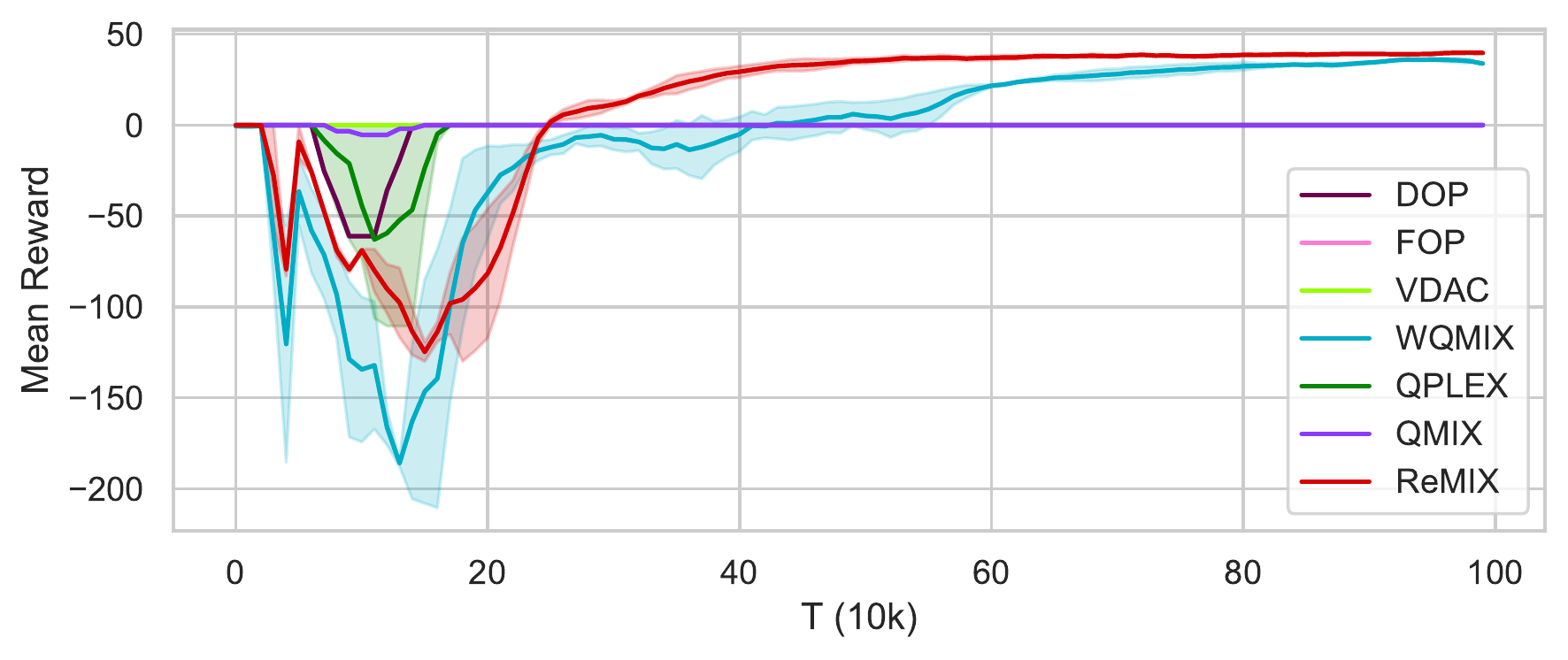}
		\caption{\small Punishment $ = -2 $}
		\label{fig:pp_2}
	\end{subfigure}
	\caption{Average reward per episode on the Predator-Prey tasks for \NAME and other baseline algorithms of 4 settings. }
	\label{fig:pp}
\end{figure*}

To start with, we consider a complex partially-observable multi-agent cooperative environment, Predator-Prey, that involves 8 agents in cooperation as predators to catch 8 prey on a 10$ \times $10 grid. In this task, a successful capture with the positive reward of 1 must include two or more predator agents surrounding and catching the same prey simultaneously, requiring a high level of cooperation. A failed coordination between agents to capture the prey, which happens when only one predator catches the prey, will receive a negative punishment reward. The greater punishment determines the degree of monotonicity. Algorithms that suffer from relative overgeneralization issues or make poor trade-offs in joint action-value function projection will fail to solve this task.

We select multiple state-of-the-art MARL approaches as baseline algorithms for comparison, which include value-based factorization algorithm (i.e., QMIX, WQMIX, and QPLEX), decomposed policy gradient method (i.e., VDAC), and decomposed actor-critic approaches (i.e., FOP and DOP). All mentioned baseline algorithms have shown strength in handling MARL tasks in existing works.

Figure~\ref{fig:pp} shows the performance of seven algorithms with different punishments, where all results demonstrate the superiority of \NAME over others. Besides, regarding efficiency, we can spot that \NAME has the fastest convergence speed in seeking the best policy. In Figure~\ref{fig:pp_1.5} and ~\ref{fig:pp_2}, \NAME significantly outperforms other state-of-the-art algorithms in a hard setting requiring a higher level of coordination among agents as learning the best policy with improved joint action representation is required in this setting. Most algorithms, such as QMIX, FOP, and DOP, end up learning a sub-optimal policy where agents learn to work together with limited coordination. Although \NAME and WQMIX acquired good results eventually, compared to the latter, \NAME achieves better performance and converges to the optimal policy profoundly faster than WQMIX, demonstrating that our optimal weighting approach can generate a better joint action-value projection.

\subsection{SMAC}

\begin{figure*}[ht!]
	\centering
	\begin{subfigure}[t]{0.3\textwidth}
		\centering
		\includegraphics[width=\textwidth]{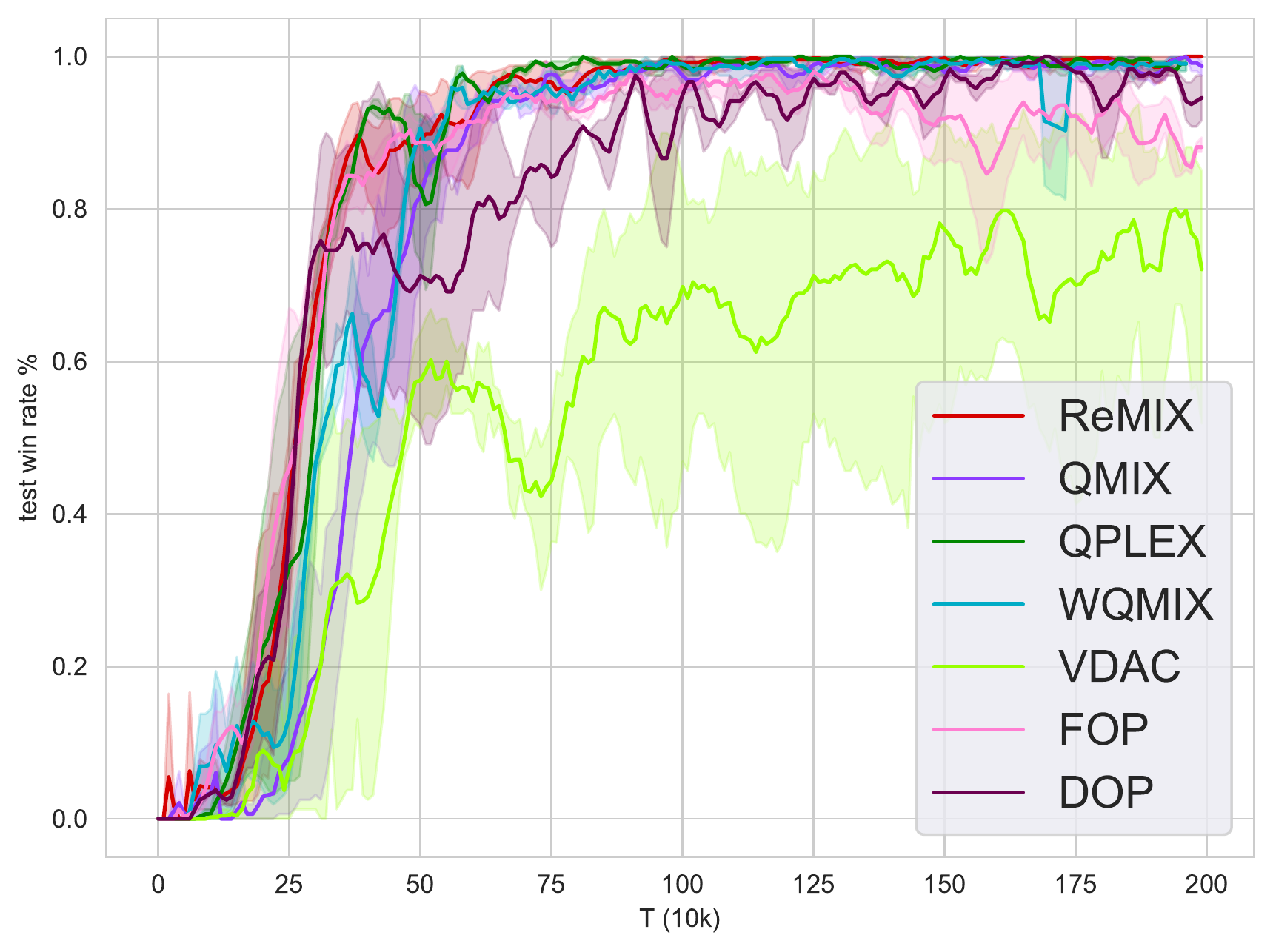}
		\caption{\small 1c3s5z (easy)}
		\label{fig:1c3s5z}
	\end{subfigure}
	\begin{subfigure}[t]{0.3\textwidth}
		\centering
		\includegraphics[width=\textwidth]{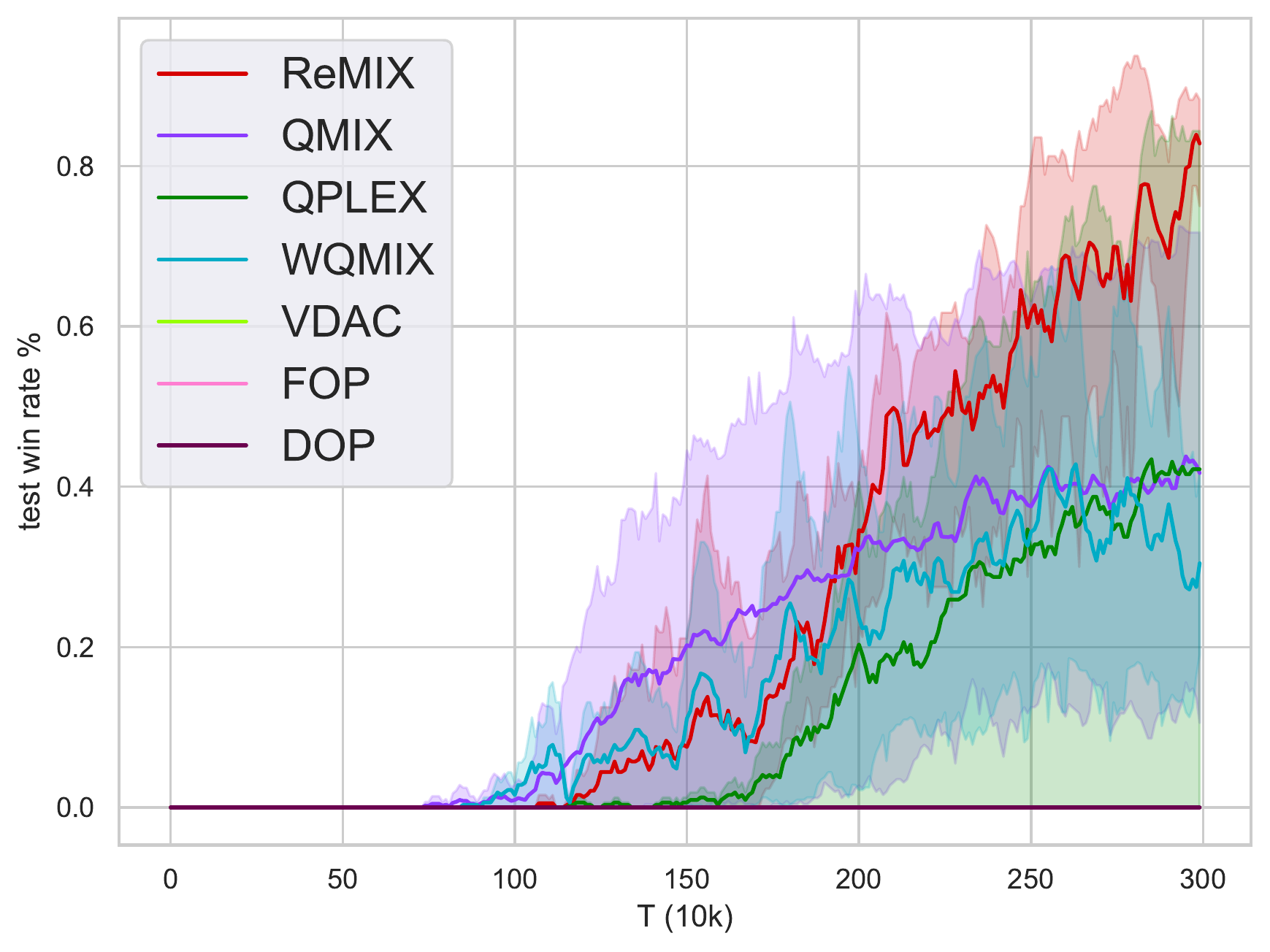}
		\caption{\small 3s\_vs\_5z (hard)}
		\label{fig:3svs5z}
	\end{subfigure}
	\begin{subfigure}[t]{0.3\textwidth}
		\centering
		\includegraphics[width=\textwidth]{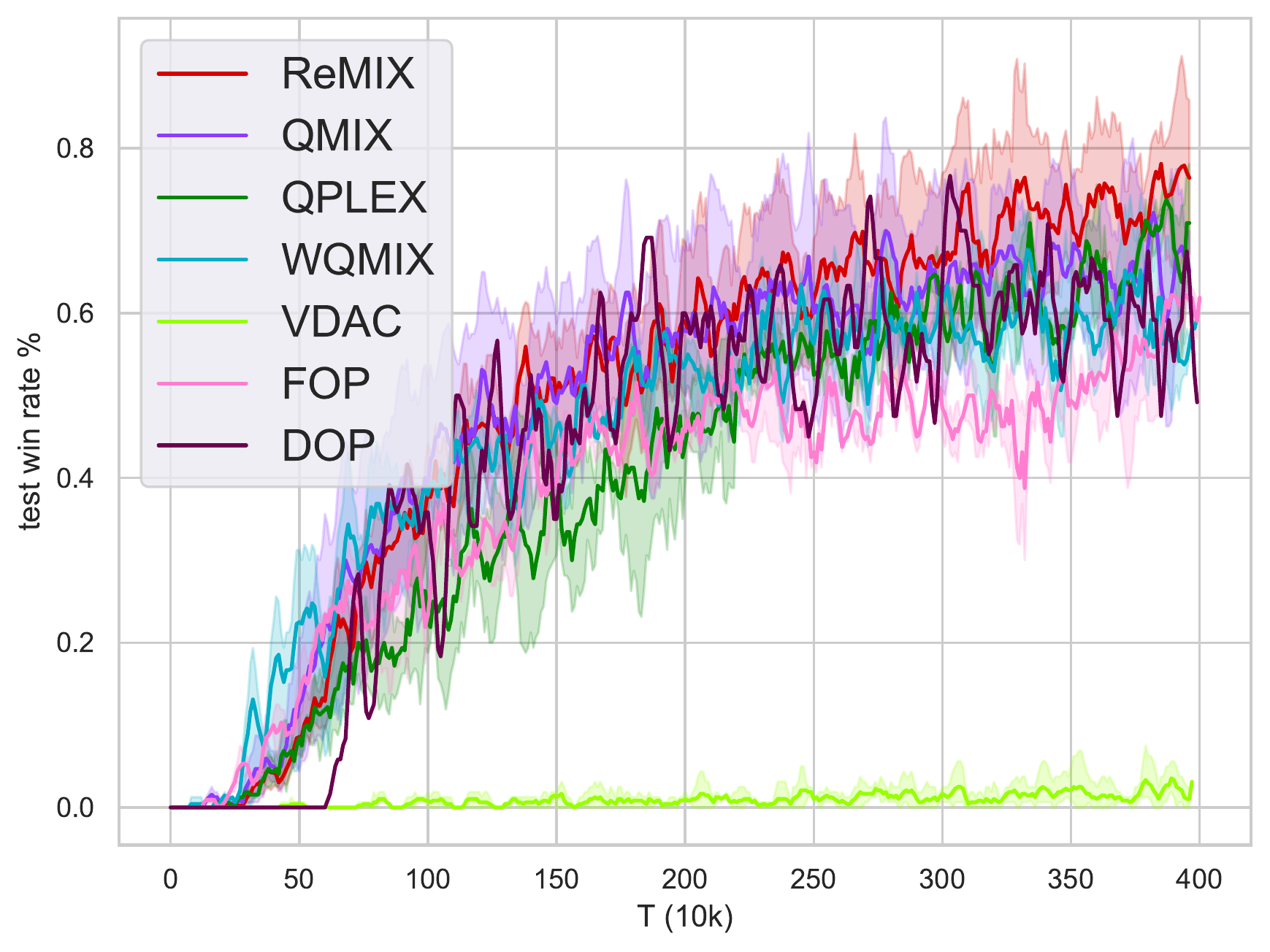}
		\caption{\small 5m\_vs\_6m (hard)}
		\label{fig:5mvs6m}
	\end{subfigure}
	
	\begin{subfigure}[t]{0.3\textwidth}
		\centering
		\includegraphics[width=\textwidth]{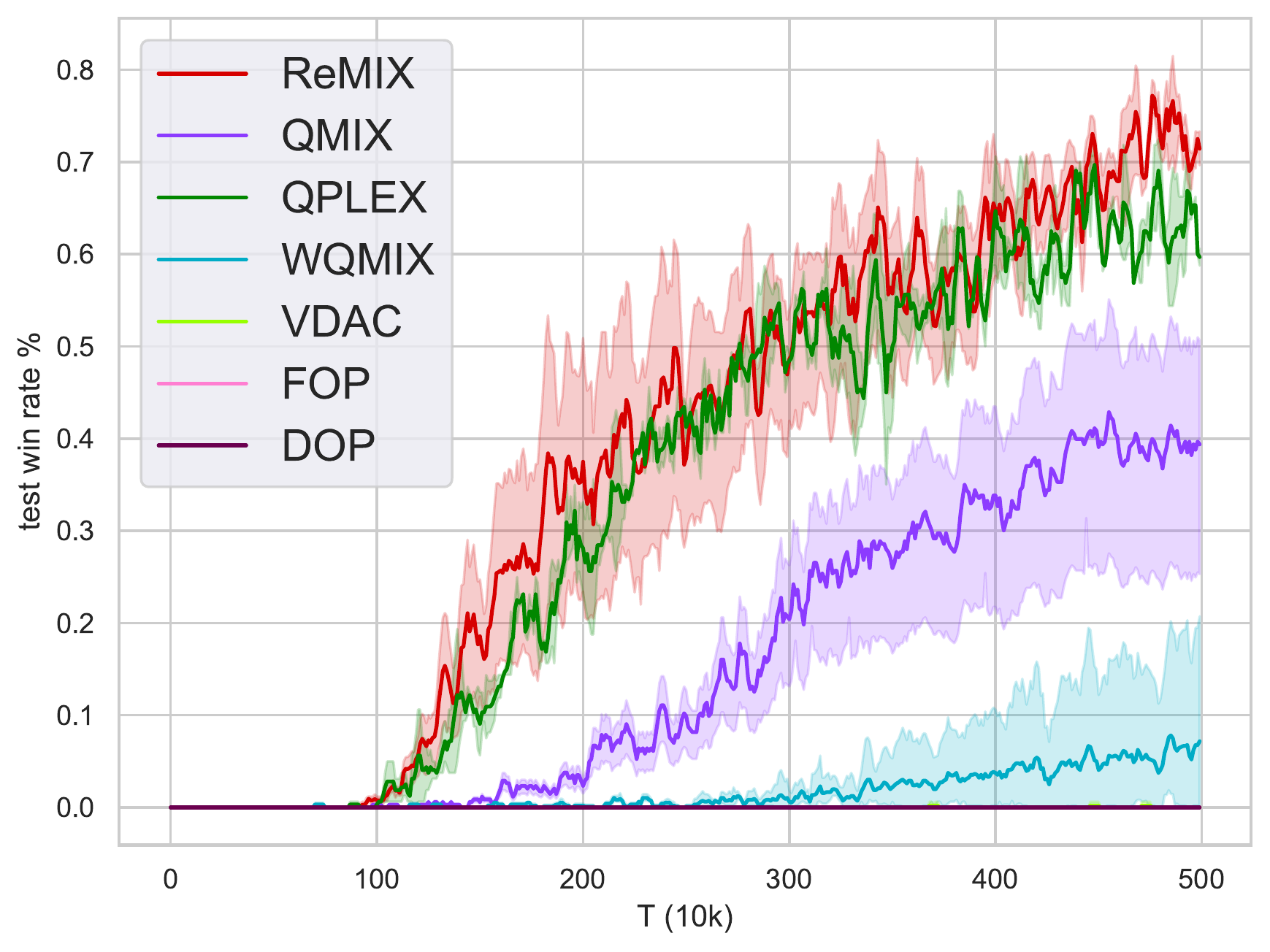}
		\caption{\small 6h\_vs\_8z (super hard)}
		\label{fig:6hvs8z}
	\end{subfigure}
	\begin{subfigure}[t]{0.3\textwidth}
		\centering
		\includegraphics[width=\textwidth]{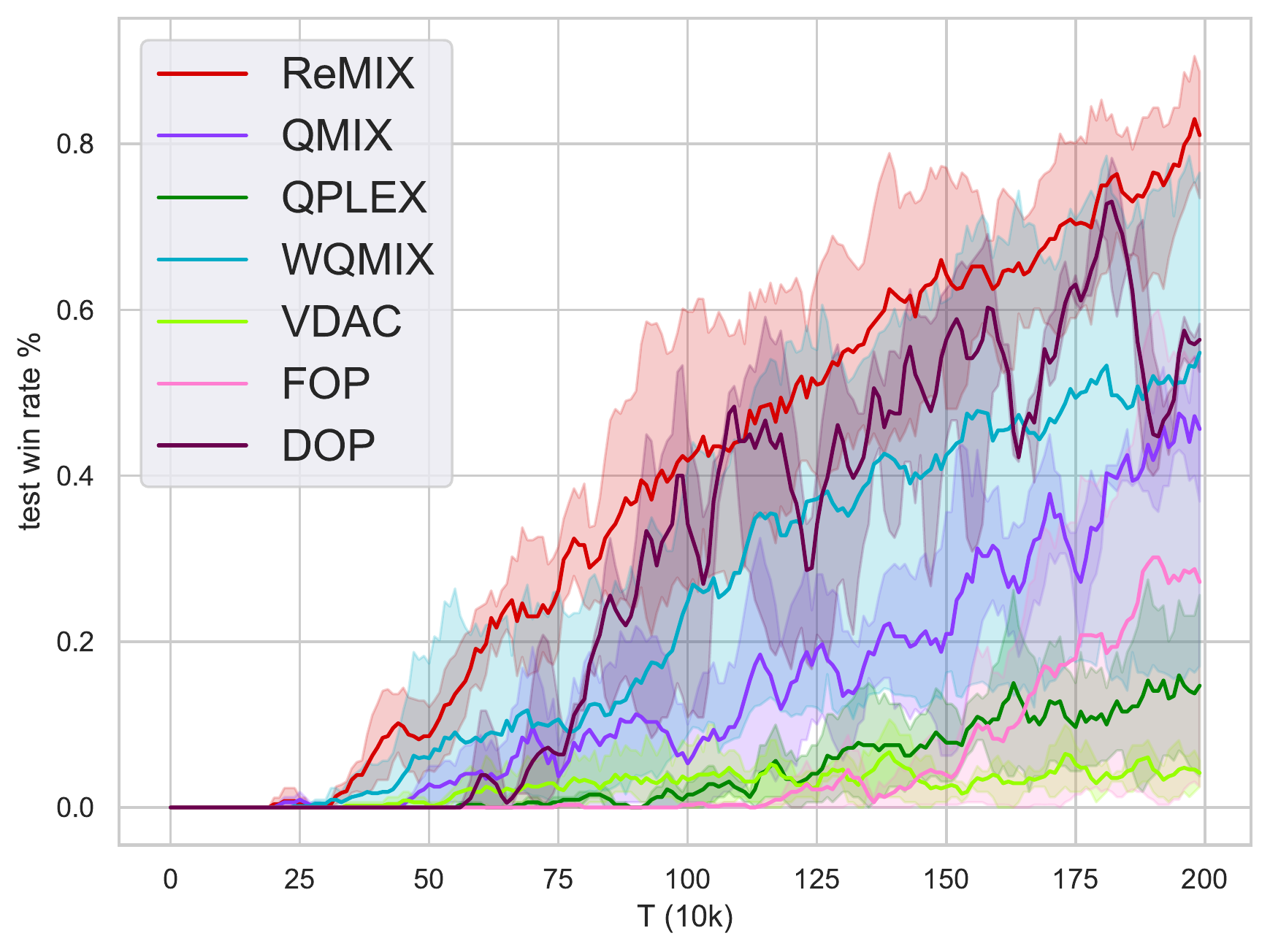}
		\caption{\small MMM2 (super hard)}
		\label{fig:mmm2}
	\end{subfigure}
	\begin{subfigure}[t]{0.3\textwidth}
		\centering
		\includegraphics[width=\textwidth]{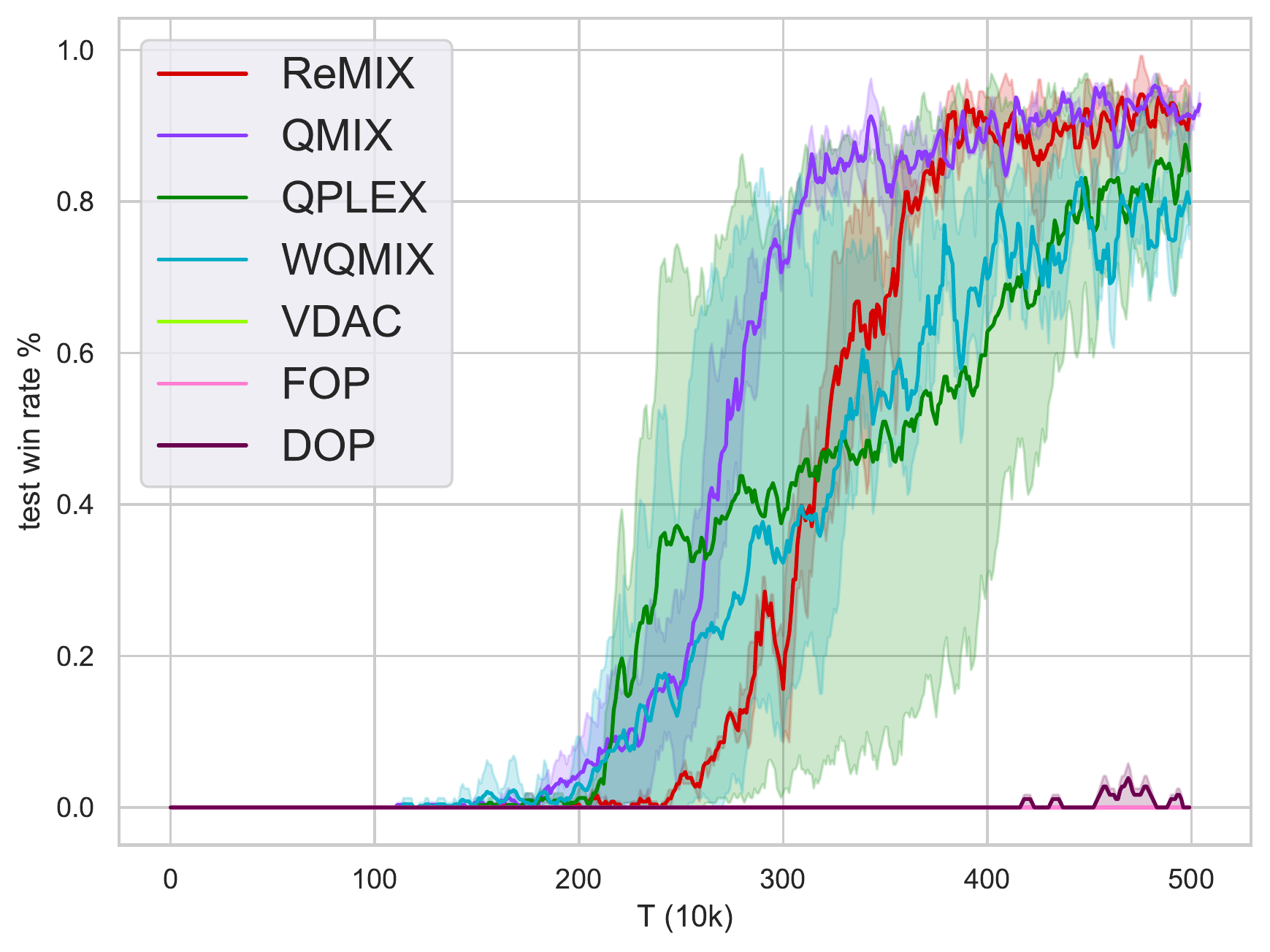}
		\caption{\small corridor (super hard)}
		\label{fig:corridor}
	\end{subfigure}
	\caption{Results of 6 maps (from easy to super hard) on the SMAC benchmark.}
	\label{fig:smac}
\end{figure*}

Next, we evaluate \NAME on the SMAC benchmark. We report the experiments on six maps consisting of one easy map, two hard maps, and three super-hard maps. The selected state-of-the-art baseline algorithms for this experiment are consistent with those in the Predator-Prey environment. The empirical results are provided in Figure~\ref{fig:smac}, demonstrating that \NAME can effectively generate optimal weight projection for joint actions on SMAC for achieving a higher win rate, especially when the environment becomes substantially complicated and harder, such as \textit{MMM2}. We can see that several state-of-the-art policy-based factorization algorithms are brittle when significant exploration is undergone since joint action representations generated by them are sub-optimal.

Specifically, \NAME performs well on an easy map \textit{1c3s5z} in Figure~\ref{fig:1c3s5z}, albeit holding the comparable performance among algorithms. On hard maps, such as \textit{3s\_vs\_5z}, the best policy found by our optimal weighting approach significantly outperforms the remaining baseline algorithms regarding winning rate. For super-hard map \textit{6h\_vs\_8z}, \textit{MMM2}, and \textit{corridor}, \NAME, along with QMIX, WQMIX, and QPLEX, can learn a better policy than VDAC, DOP, and FOP. We achieve the highest winning rate by adopting our algorithm on \textit{6h\_vs\_8z} and \textit{MMM2}. Compared to our method, QMIX and WQMIX suffer from this map as their joint action representations are oblivious to some latent factors, such as the shape of the monotonic mixing network, and therefore fail to generate an accurate joint action representation. On \textit{corridor}, \NAME manages to learn the model with better performance than WQMIX, QPLEX, and other policy-based algorithms, though standard QMIX has the fastest convergence rate among all baseline algorithms.

\subsection{Optimal Weight Pattern}

\begin{figure}[h]
	\centering
	\includegraphics[width=0.5\textwidth]{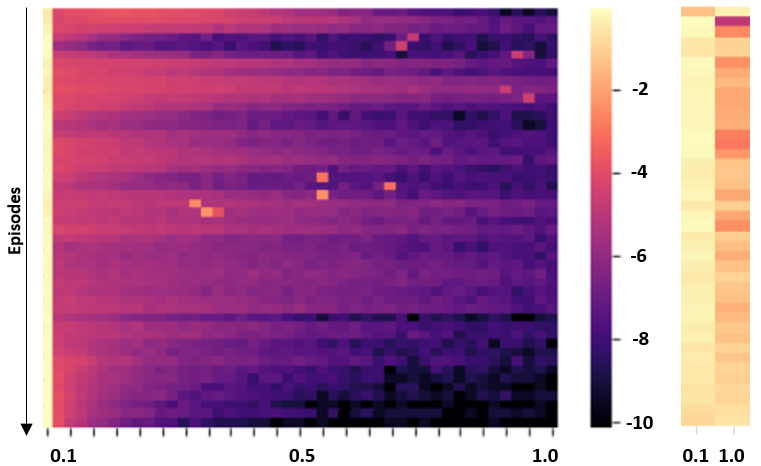}
	\caption{\small Heatmap pattern of generated optimal weights (left) and WQMIX weights (right) used in the Predator-Prey environment. The training episodes range from 0 to 1M.}
	\label{fig:heatmap}
\end{figure}

In this part, we draw heat maps of the projecting weight probability distributions of \NAME and WQMIX as the training proceeds to better visualize and compare the weight evolution pattern of transitions sampled as in a minibatch, shown in Figure~\ref{fig:heatmap}. Adopted weights are generated from the Predator-Prey task with a punishment of -2. We re-scale the absolute value of the transition number to logarithmic probability for scale normalization. As shown in the figure, the probability value of a certain weight is represented by colors, decreasing from 0 in light yellow to -10 in black. The vertical axis represents the training steps, and the horizontal axis represents the normalized weight value, where ours ranges from 0.1 to 1 and WQMIX is either 0.1 or 1.

The heat map effectively shows the general trend of the weight evolution pattern at different steps. For WQMIX on the Figure~\ref{fig:heatmap} right, with the training of the algorithm, the transitions with the smaller weight (0.1) will become more, and those with the larger weight (1) will become fewer. Evolution like this happens since the transitions will approach optimal as the training goes on, while the algorithm will still take all transitions as potential overestimations and assign smaller weights to them as adjustments. A similar evolution pattern can be found in our weight pattern. On the left of Figure~\ref{fig:heatmap}, during the training, the transitions with higher weights become less, and most transitions will migrate to the bottom right with lower weights, which empirically recovers the heuristic in WQMIX. 

Moreover, as an optimal weight projection is used in \NAME, we will assign different weights to transitions based on evaluating every one of them. We notice that some transitions are assigned with medium weight during the training, given by the light yellow spots on the left of Figure~\ref{fig:heatmap}. Such a phenomenon demonstrates that the binary-weighted projections in WQMIX are not always accurate. Hence, \NAME considers all transitions by applying optimal weights to their projections, leading to better results, which also illustrates the performance gap with other algorithms like WQMIX in previous experiments.

\subsection{Sensitivity Experiment regarding Normalization}

\begin{figure}[t]
	\centering
	\includegraphics[width=0.7\textwidth]{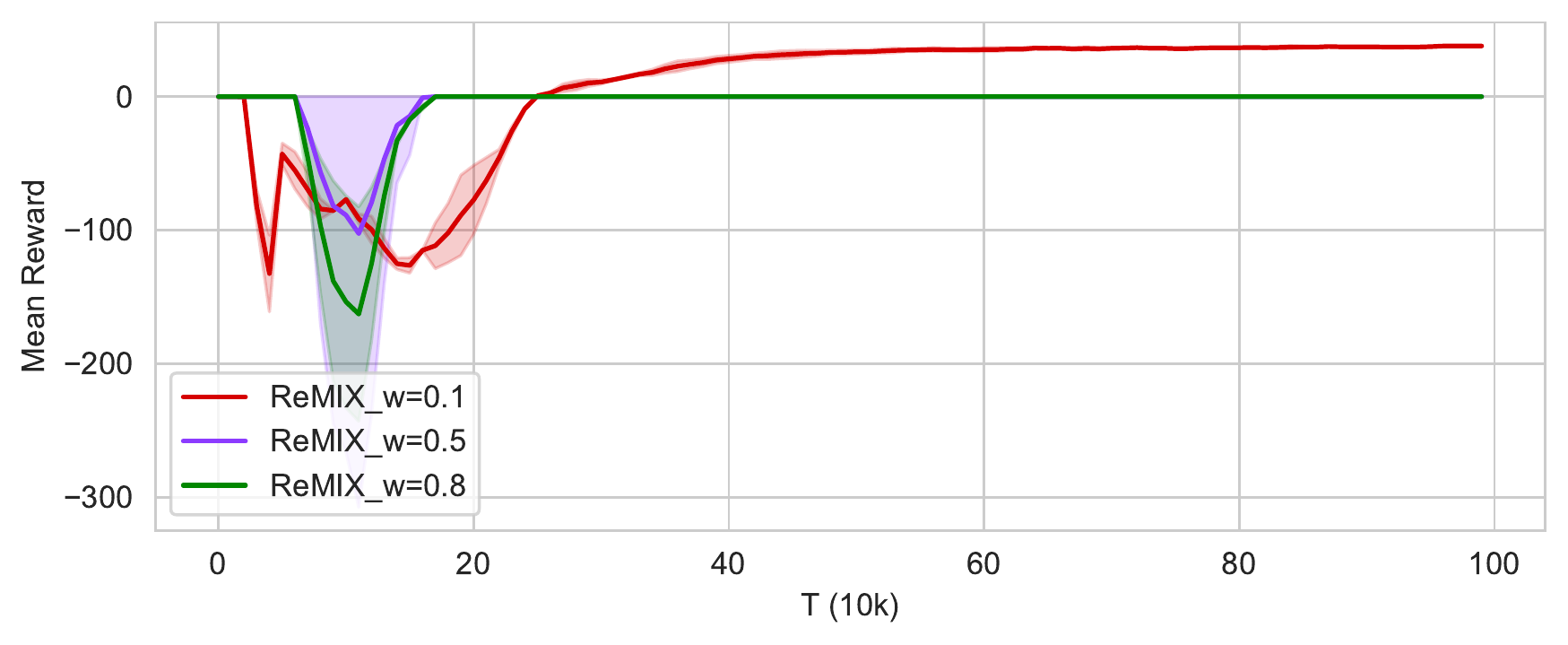}
	\caption{Sensitivity of normalizing the minimum weight to 0.1, 0.5, and 0.8.}
	\label{fig:sen_weight}
\end{figure}

We run the experiment in the Predator-Prey environment with a punishment of -1.5 to report the sensitivity with respect to the different normalization of weight ranges. We keep the maximum normalized weight as 1 but test the effects of using different minimums, which are 0.1, 0.5, and 0.8. 

As shown in Figure~\ref{fig:sen_weight}, the experiment results are sensitive to the range of the normalized weight. When we map the weight to a minimum of 0.5, the agents in this task can only find a sub-optimal solution. It may be because there exist many overestimations in this task. The joint action representation generated at the is not accurate. Higher minimum weight normalization damages the capability of \NAME to adjust the projection to retrieve a precise representation rapidly. Therefore, \NAME performs well under 0.1 to 1 normalization of the weight in this scenario. Note in WQMIX weight is used as $\alpha =$ 0.1 for Predator-Prey and $\alpha =$ 0.5 for SMAC according to their experiment settings.

\begin{figure}[ht!]
	\centering
	\includegraphics[width=0.5\textwidth]{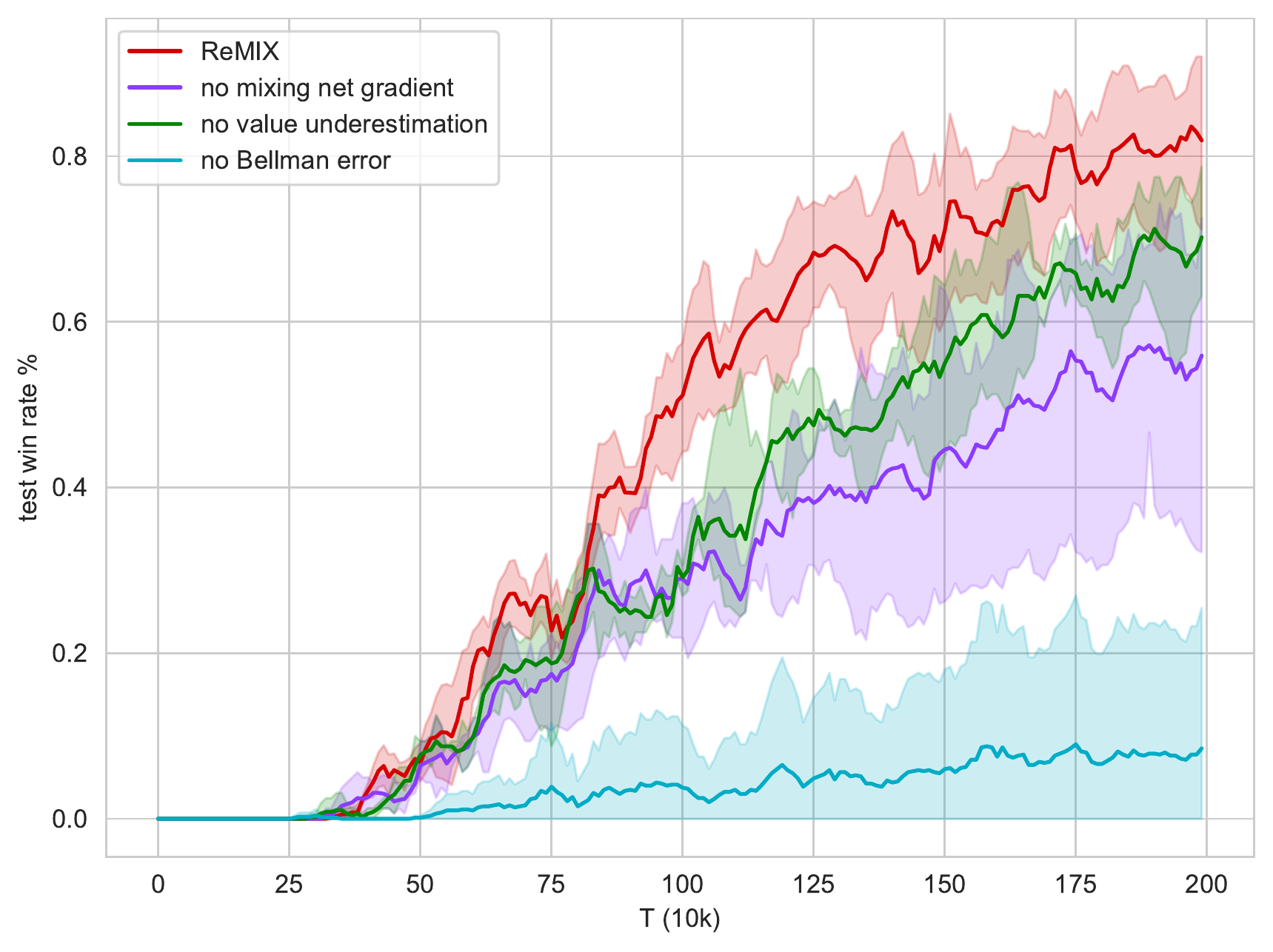}
	\caption{Ablation by disabling one term each for \NAME on MMM2 (super hard) }
	\label{fig:ablation}
\end{figure}

\subsection{Ablation Experiment}

For ablations, we conduct experiments by disabling one single term (mentioned in Theorem~\ref{theo:weight}) each at a time to investigate their contribution to finding optimal projection weights, respectively. The ablation results are given in Figure~\ref{fig:ablation}. The terms considered in these experiments are Bellman error, value underestimation, and gradient of the mixing network. Figure 4 shows the results on MMM2. Compared to the original result, missing any of the terms will be detrimental to the performance, and the tests without Bellman error have the lowest final winning rate, which is less than 10\%. Furthermore, when we turn off the gradient of the mixing network term, the result is only around 60\%. Such a phenomenon demonstrates that providing a quantitative weight factorization for the value projection is the critical factor in value-factorization-based MARL tasks. The designing of an optimal weighting scheme without taking the influence of the mixing network into account will be less capable of achieving the ideal final results.

\section{Conclusion}
\label{sec:conclusion}

In this paper, we formulate the optimal value function factorization as a policy regret minimization and solve the optimal projection weights for the cooperative multiagent reinforcement learning problems in closed form. The theoretical results shed light on key factors for an optimal projection. Therefore, we propose \NAME as a tractable weight approximation approach to enable MARL algorithms with improved value function factorization. Our experiment results in multiple MARL environments show the effectiveness of \NAME by demonstrating superior convergence and empirical performance over state-of-the-art factorization-based methods.

\bibliography{reference}
\bibliographystyle{abbrvnat}

\appendix

\newpage
\section{Appendix}
\label{sec:append}

\subsection{Nomenclature}
\label{subsec:notation}
We use Table~\ref{tab:notation} to summarize the often-used notations in this paper. More detailed introduction of these notations can be seen in Sections~\ref{sec:background}, \ref{sec:related}, and \ref{sec:strategy}.

\begin{table}
	\centering
	\caption{Definitions of notations.}
	\label{tab:notation}
	\begin{tabular}{l|c}
		\hline
		Notation & Definition \\
		\hline 
		$ s $ & State of the environment \\
		$ a $ & Agent \\
		$ \bu $ & Agents' joint action \\
		$ r $ & Reward \\
		$ \gamma $ & Discount factor \\
		$ \btau $ & Joint action-observation history \\
		$ \pi $ & Joint policy \\
		$ \pi^* $ & Expected optimal joint policy \\
		$ Q(\cdot) $ & Action value function \\
		$ \qtot(\cdot) $ & Monotonic mixing of per-agent action value function \\
		$ Q^*(\cdot) $ & Unrestricted joint action value function \\
		$ V(\cdot) $ & Value function \\
		$ A(\cdot) $ & Advantage function \\
		$ f_s(\cdot) $ & Monotonic function with input state $ s $ \\
		$ \eta(\pi) $ & Expected return under the joint policy $ \pi $ \\
		$ \mathcal{B}^*$ & Bellman operator, where $ \mathcal{B}^*Q(\btau,\bu) \eqdef r(s,\bu) + \gamma\arg\max_{\bu'}\mathbb{E}_{\btau'}Q(\btau',\bu') $ \\
		$ w $ & Projection weights of transitions \\
		\hline
	\end{tabular}
\end{table}
	
\subsection{Proof of Lemma~\ref{lem:fprime}}
\label{proof:lem_1}
Considering a two-layer mixing network of the non-negative weight matrix $ W_1, W_2 $, bias $ b_1, b_2 $ and activation function $ h(\cdot) $. The input $ \vec{Q} $ is the vector of all the agents' utilities. Assume there are $ n $ agents, $ \vec{Q} $ is:
\begin{equation*}
	\vec{Q}=[Q^1,\dots,Q^n]\T
\end{equation*}

We assume the mixing network has the width of $ m $, based on the input/output dimension, $ W_1 $ should be a $ n \times m $ matrix as:
\begin{equation*}
	W_1=
	\begin{bmatrix}
		w^1_{11} & \dots & w^1_{1m} \\
		\vdots & \ddots & \vdots \\
		w^1_{n1} & \dots & w^1_{nm}
	\end{bmatrix},
\end{equation*}
and $ W_2 $ is a $ m $-dimension vector given by:
\begin{equation*}
	W_2=[w^2_1,\dots,w^2_m]\T.
\end{equation*}

Therefore, $ \qtot $ calculated from the utility vector $ \vec{Q} $ becomes:
\begin{equation}
	f_s(\vec{Q})=h(\vec{Q}\T W_1 + b_1)W_2\T + b_2.
	\label{eq:mix_net}
\end{equation}

Considering one of the utilities $ Q^a $, as long as the derivative of activation $ h(\cdot) $ exists ($ h(\cdot) $ is smooth and differentiable), based on~\eqref{eq:mix_net}, the result is:
\begin{equation}
	f_{s,Q^a}'=\frac{\p \qtot}{\p Q^a}=h_{Q^a}'(\vec{Q}\T W_1+b_1)\sum_{j=1}^{m}w^1_{aj}w^2_j.
\end{equation}

This concludes the proof.

\subsection{Proof of Theorem~\ref{theo:weight}}
\label{proof:theo_1}

We have provided the outline of the proof including four key steps. In this section, we present the detailed proof of the theorem. The optimization problem needed solving is:
\begin{equation*}
	\begin{aligned}
		\min_{w_k} \quad &\eta(\pi^*)-\eta(\pi_k) \\
		\textrm{s.t.} \quad &Q_k=\arg\min_{Q \in \mathcal{Q}} \mathbb{E}_{\mu}[w_k(\btau,\bu)(Q-\mathcal{B}^*Q^*_{k-1})^2(\btau,\bu)], \\
		&\mathbb{E}_{\mu}[w_k(\btau,\bu)]=1, \quad w_k(\btau,\bu) \ge 0, \\
		&Q_k(\btau,\bu)=f_s(Q^1(\tau_1,u_1),\dots,Q^n(\tau_n,u_n)),
	\end{aligned}
\end{equation*}

This problem is equivalent to:
\begin{equation}
	\begin{aligned}
		\min_{p_k} \quad &\eta(\pi^*)-\eta(\pi_k) \\
		\textrm{s.t.} \quad &Q_k=\arg\min_{Q \in \mathcal{Q}} \mathbb{E}_{p_k}[(Q-\mathcal{B}^*Q^*_{k-1})^2(\btau,\bu)], \\
		&\sum_{\btau,\bu}p_k(\btau,\bu)=1, \quad p_k(\btau,\bu) \ge 0, \\
		&Q_k(\btau,\bu)=f_s(Q^1(\tau_1,u_1),\dots,Q^n(\tau_n,u_n)),
	\end{aligned}
	\label{eq:ap_obj_1}
\end{equation}
where $ p_k=w_k(\btau,\bu)\mu(\btau,\bu) $ is the solution to problem~\eqref{eq:ap_obj_1}.

To solve the optimization problem in~\eqref{eq:ap_obj_1}, we needed to provide some definitions, which are \textit{total variation distance}, \textit{Wasserstein metric}, \textit{ the diameter of a set}, and \textit{universal approximator}.
\begin{definition}[Total variation distance]
	The total variation distance of the distribution P and Q is defined as $ D(P,Q)=\frac{1}{2} \Vert P-Q \Vert $.
\end{definition}
\begin{definition}[Wasserstein metric]
	For F,G two cumulative distribution functions over the reals, the Wasserstein metric is defined as $ d_p(F,G) \eqdef \inf_{U,V} \Vert  U-V \Vert_p $, where the infimum is taken over all pairs of random variables (U,V) with cumulative distributions F and G, respectively.
\end{definition}
\begin{definition}[Diameter of a set]
	The diameter of a set A is defined as $ \mathrm{diam}(A)=\sup_{x,y \in A} m(x,y) $, where m is the metric on A.
\end{definition}
\begin{definition}[Universal approximator]
	A class of function $ \hat{\mathcal{F}} $ from $ \mathbb{R}^n $ to $ \mathbb{R} $ is a universal approximator for a class of functions $ \mathcal{F} $ from $ \mathbb{R}^n $ to $ \mathbb{R} $ if for any $ f \in \mathcal{F} $, any compact domain $ D \subset \mathbb{R}^n $, and any positive $ \epsilon $, one can find a $ \hat{f} \in \hat{\mathcal{F}} $ with $ \sup_{x \in D} |f(x)-\hat{f}(x)| \leq \epsilon $.
\end{definition}

Though we will leverage trajectories $ \btau $ in further derivation, we propose several assumptions using state $ s $ for simplicity and consistency with a general definition like the existing practice in~\citep{su2022divergence}. The mild assumptions are given as follows:
\begin{assumption}
	The state space $ S $, action space $ U $, and observation space $ Z $ are compact metric spaces.
	\label{assum:metric}
\end{assumption}
\begin{assumption}
	The action-value and observation functions are continuous on $ S \times U $ and $ Z $, respectively.
	\label{assum:continuous}
\end{assumption}
\begin{assumption}
	The transition function $ T $ is continuous with respect to $ S \times U $ in the sense of Wasserstein metric, which is $ \lim_{(s,\bu)\rightarrow(s_0,\bu_0)} d_p(T(\cdot|s,\bu),T(\cdot|s_0,\bu_0)) $.
\end{assumption}
\begin{assumption}
	The joint policy $ \pi $ is the product of each agent's individual policy $ \pi^a(u_a|\tau_a) $.
	\label{assum:policy}
\end{assumption}
\begin{assumption}
	The monotonic mixing function $ f_s(\cdot) $ regarding per-agent action-value function $ Q^a $ for $ \forall a \in A $ is smooth and differentiable.
	\label{assum:monotonic}
\end{assumption}
These assumptions are not strict and can be satisfied in most MARL environments. 

Let $ d^{\pi^a}(s) $ denote the discounted state distribution of agent $ a $, and $ d_i^{\pi^a}(s) $ denote the distribution where the state is visited by the agent for the $ i $-th time. Thus, we have:
\begin{equation}
	d^{\pi^a}(s) = \sum^{\infty}_{i=1} d^{\pi^a}_i(s),
\end{equation}
where each $ d^{\pi^a}_i(s) $ is given by:
\begin{equation}
	d^{\pi^a}_i(s)= (1-\gamma)\sum^\infty_{t_i=0}\gamma^{t_i}\Pr(s_{t_i}=s, s_{t_k}=s, \forall k=1,...,i-1),
\end{equation}
where the $ \Pr(s_{t_i}=s, s_{t_k}=s, \forall k=1,...,i-1) $ in this equation contains the probability of visiting state $ s $ for the $ i $-th time at $ t_i $ and a sequence of times $ t_k $, for $ k=1,..., i $, such that state $ s $ is visited at each $ t_k $. Thus, state $ s $ will be visited for $ i $ times at time $ t_i $ in total.


The following lemmas are proposed by~\cite{liu2021regret}, where Lemma~\ref{lem:tvd} support the derivation of the Lemma~\ref{lem:d_pi}, and the latter demonstrates that $ \left| \frac{\p d^{\pi^a}(s)}{\p \pi^a(s)} \right| $ is a small quantity.
\begin{lemma}
	Let $ f $ be an Lebesgue integrable function. P and Q are two probability distributions, $ f \leq C $, then:
	\begin{equation}
		|\mathbb{E}_{P(x)}f(x)-\mathbb{E}_{Q(x)}f(x)| \le C\cdot D(P,Q).
	\end{equation}
	\label{lem:tvd}
\end{lemma}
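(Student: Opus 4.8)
\textbf{Proof plan for Lemma~\ref{lem:tvd}.}

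The plan is to establish the bound by directly manipulating the difference of the two expectations and invoking the definition of the total variation distance, $D(P,Q)=\tfrac12\|P-Q\|$ (interpreting $\|\cdot\|$ as the $L_1$ norm of the difference of densities, or equivalently twice the supremum of $|P(A)-Q(A)|$ over measurable sets $A$). First I would write
\begin{equation*}
	\mathbb{E}_{P(x)}f(x)-\mathbb{E}_{Q(x)}f(x)=\int f(x)\,\big(dP(x)-dQ(x)\big),
\end{equation*}
which is well-defined since $f$ is Lebesgue integrable and both measures are probability measures. The key observation is that adding a constant to $f$ does not change this integral, because $\int (dP-dQ)=1-1=0$; hence for any constant $c$ we also have $\mathbb{E}_{P}f-\mathbb{E}_{Q}f=\int (f(x)-c)\,(dP(x)-dQ(x))$.

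Next I would exploit the freedom in $c$ together with the hypothesis $f\le C$. The standard trick is to decompose the signed measure $dP-dQ$ via its Hahn--Jordan decomposition into its positive and negative parts, supported on disjoint sets $A^+$ and $A^-$ respectively, each of total mass $D(P,Q)$. Choosing $c$ to be (say) the infimum or any value making $f-c$ controlled in magnitude, one bounds
\begin{equation*}
	\left|\int (f-c)\,(dP-dQ)\right|\le \int_{A^+}(f-c)\,d(P-Q)+\int_{A^-}(c-f)\,d(Q-P),
\end{equation*}
and using $f\le C$ on the set carrying the positive part and a matching lower bound on the set carrying the negative part, each piece is at most $C\cdot D(P,Q)$ after the appropriate normalization, or more cleanly, one gets $(C-\inf f)\cdot D(P,Q)$ and then absorbs the constant. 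Since the paper only needs the cruder form with $C$, I would simply note that $|f-c|\le C$ can be arranged when $f$ is also bounded below by a symmetric constant, or present the tighter statement $|\mathbb{E}_P f-\mathbb{E}_Q f|\le (\sup f-\inf f)\,D(P,Q)\le 2C\cdot D(P,Q)$ and remark that the constant is immaterial for the subsequent use in bounding $\left|\frac{\p d^{\pi^a}(s)}{\p\pi^a(s)}\right|$.

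The main obstacle is purely a matter of bookkeeping around the constant $C$: the clean inequality is with the oscillation $\sup f-\inf f$, whereas the statement as written uses only the upper bound $C$, so I would either (i) tacitly assume $f\ge 0$ or a two-sided bound $|f|\le C$, which is the implicit setting in~\citep{liu2021regret}, or (ii) state the result with the oscillation and note it is dominated by $2C$. Either way the argument is a one-line consequence of the Hahn decomposition once the mean-zero reduction is in place; there is no real analytic difficulty, only the need to be careful about which normalization of total variation is in force. I would close by citing~\citep{liu2021regret} for the precise form used downstream.
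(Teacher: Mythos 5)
The paper does not actually prove this lemma; it is stated as imported from~\citep{liu2021regret}, so there is no in-paper argument to compare against. Your plan is the standard and correct one: reduce to a mean-zero integrand by subtracting a constant (legitimate since $\int (dP-dQ)=0$), apply the Hahn--Jordan decomposition of the signed measure $P-Q$ into positive and negative parts each of mass $D(P,Q)$, and bound each piece by the sup of $|f-c|$. Your central observation is also right and worth making explicit: as literally stated, with only the one-sided hypothesis $f\le C$, the inequality is false --- take $f=-M$ on a set where $P$ and $Q$ assign different mass and let $M\to\infty$; the left side grows without bound while the right side is fixed. The correct conclusions are $|\mathbb{E}_P f-\mathbb{E}_Q f|\le(\sup f-\inf f)\,D(P,Q)$, which specializes to the stated bound under $0\le f\le C$, or $2C\,D(P,Q)$ under $|f|\le C$; in the downstream use (Lemma~\ref{lem:d_pi}), $f$ is built from probabilities and discounted visitation terms, so the two-sided bound is the implicit setting. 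One cosmetic point: your displayed intermediate inequality drops absolute values in a way that tacitly assumes $f\ge c$ on $A^+$ and $f\le c$ on $A^-$; cleaner is simply $\bigl|\int(f-c)\,d(P-Q)\bigr|\le\sup|f-c|\cdot\bigl[(P-Q)^+(\Omega)+(P-Q)^-(\Omega)\bigr]=2\sup|f-c|\,D(P,Q)$, then optimize over $c$.
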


\begin{lemma}
	Let $ \rho $ be the probabilityof the agent $ a $ starting from $ (s,u^a) $ and coming back to $ s $ at time step $ t $ under policy $ \pi^a $, i.e. $ \Pr(s_0=s,u^a_0=u^a,s_t=s,s_{1:t-1} \neq s; \pi^a) $, and $ \epsilon = \sup_{s,u^a}\sum_{t=1}^{\infty}\gamma^t\rho^{\pi^a}(s,u^a,t)$. We have:
	\begin{equation}
		\left| \frac{\p d^{\pi^a}(s)}{\p \pi^a(s)} \right| \le \epsilon d_1^{\pi^a}(s),
	\end{equation}
	where $ d^{\pi^a}_1(s)=(1-\gamma)\sum^\infty_{t_1=0}\gamma^{t_1}\Pr(s_{t_1}=s) $ and $ \epsilon \leq 1 $.
	\label{lem:d_pi}
\end{lemma}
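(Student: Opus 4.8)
The plan is to bound $\left|\frac{\p d^{\pi^a}(s)}{\p\pi^a(s)}\right|$ by decomposing the discounted state distribution $d^{\pi^a}(s)$ according to how many times the state $s$ has already been visited, i.e., $d^{\pi^a}(s)=\sum_{i=1}^{\infty}d_i^{\pi^a}(s)$, and observing that the first visit term $d_1^{\pi^a}(s)$ does not depend on $\pi^a(s)$ (the policy evaluated \emph{at} $s$), since the trajectory up to the first arrival at $s$ never conditions on the action chosen at $s$. Hence only the terms $d_i^{\pi^a}(s)$ for $i\ge 2$ contribute to the derivative. The key observation is that each $d_i^{\pi^a}(s)$ can be written as a convolution: reaching $s$ for the $i$-th time means reaching it for the $(i-1)$-th time, then following a sub-trajectory that leaves $s$ (taking some action $u^a$ with probability $\pi^a(u^a|s)$) and returns to $s$ after $t\ge 1$ steps, an event whose discounted probability is governed by $\rho^{\pi^a}(s,u^a,t)$.

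First I would make this convolution structure precise: show that for $i\ge 2$,
\begin{equation*}
	d_i^{\pi^a}(s) = \sum_{u^a}\pi^a(u^a|s)\sum_{t=1}^{\infty}\gamma^t\rho^{\pi^a}(s,u^a,t)\, d_{i-1}^{\pi^a}(s),
\end{equation*}
or an inequality of this form after accounting carefully for the normalization $(1-\gamma)$ factors and the bookkeeping of visit times. Summing over $i\ge 2$ and using $\sum_{i\ge 1}d_i^{\pi^a}(s)=d^{\pi^a}(s)$, one gets that the part of $d^{\pi^a}(s)$ sensitive to $\pi^a(s)$ is controlled by the ``return kernel'' $\sum_{u^a}\pi^a(u^a|s)\sum_{t\ge1}\gamma^t\rho^{\pi^a}(s,u^a,t)$ acting on $d^{\pi^a}(s)$ itself. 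Differentiating with respect to $\pi^a(s)$ and peeling off one factor of the return kernel then yields a bound of the form $\left|\frac{\p d^{\pi^a}(s)}{\p\pi^a(s)}\right|\le \epsilon\cdot\big(d^{\pi^a}(s)-d_1^{\pi^a}(s)\text{-like remainder}\big)$; bounding the geometric-series remainder by its leading term $d_1^{\pi^a}(s)$ gives the claimed $\epsilon\, d_1^{\pi^a}(s)$. The bound $\epsilon\le 1$ follows immediately since $\rho^{\pi^a}(s,u^a,t)$ are probabilities of disjoint events (first return at time exactly $t$) so $\sum_{u^a}\sum_{t\ge1}\rho^{\pi^a}(s,u^a,t)\le 1$, and discounting by $\gamma^t<1$ only shrinks it further; an application of Lemma~\ref{lem:tvd} with the bounded function being the tail of the discounted occupancy can be invoked to make the ``$f\le C$'' step rigorous.

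The main obstacle I expect is the careful combinatorial bookkeeping in the recursive decomposition of $d_i^{\pi^a}(s)$: one must verify that conditioning on ``$s$ visited exactly $i$ times by time $t_i$'' factors cleanly into ``visited $i-1$ times'' times ``a first-return excursion,'' without double-counting the intermediate visit times or mishandling the $(1-\gamma)$ normalization and the $\gamma^{t_i}$ weights across the convolution. A secondary subtlety is making sense of the ``derivative with respect to $\pi^a(s)$'' — treating $\pi^a(\cdot|s)$ as a free parameter vector while the return probabilities $\rho^{\pi^a}$ themselves also depend on the policy at \emph{other} states (and possibly, through future revisits, at $s$ as well); the cleanest route is to argue that the dominant first-order contribution comes from the explicit $\pi^a(u^a|s)$ prefactor in the excursion term, with the implicit dependence of $\rho$ on $\pi^a(s)$ contributing only higher-order-in-$\epsilon$ corrections, so that the stated bound holds to leading order. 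Once the recursion and this first-order argument are in place, the remaining steps are routine summation of geometric series.
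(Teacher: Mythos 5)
The paper does not actually prove this lemma: it is imported from \cite{liu2021regret} and stated without proof, so there is no in-paper argument to compare against. Your reconstruction is nonetheless the standard one and is essentially correct. The decomposition $d^{\pi^a}(s)=\sum_i d_i^{\pi^a}(s)$, the observation that $d_1^{\pi^a}(s)$ is independent of $\pi^a(\cdot|s)$ (the first-visit path never samples an action at $s$), and the renewal identity $d_i^{\pi^a}(s)=\beta(s)\,d_{i-1}^{\pi^a}(s)$ with $\beta(s)=\sum_{u^a}\pi^a(u^a|s)\sum_{t\ge1}\gamma^t\rho^{\pi^a}(s,u^a,t)$ are all sound; the Markov property together with the first-return definition of $\rho$ makes the convolution exact with no double counting, since the excursion avoids $s$ at intermediate times. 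One clarification: the ``implicit dependence of $\rho$ on $\pi^a(s)$'' you worry about is in fact absent, precisely because $\rho$ is a first-return probability and so the excursion never invokes the policy at $s$; the only genuine source of a leading-order caveat is the geometric factor. Differentiating $d^{\pi^a}(s)=d_1^{\pi^a}(s)/(1-\beta(s))$ gives $\left|\p d^{\pi^a}(s)/\p\pi^a(u^a|s)\right|\le \epsilon\, d_1^{\pi^a}(s)/(1-\beta(s))^2$, so the stated bound $\epsilon\, d_1^{\pi^a}(s)$ holds only up to the factor $(1-\beta(s))^{-2}=1+O(\epsilon)$, exactly the first-order reading you flag (and the one the paper implicitly adopts when it treats this quantity as ``small''). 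Your argument for $\epsilon\le1$ (disjoint first-return events, further shrunk by $\gamma^t$) is correct, and the appeal to Lemma~\ref{lem:tvd} is unnecessary for that step.
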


In the multiagent scenario, each agent only has access to its own trajectory, i.e., the environment is partially observable. Therefore, we replace the state $ s $ with agents' observation histories $ \btau $ and use the joint action $ \bu $ with joint policy $ \pi $. The conclusions will hold in the mentioned lemmas. 

Besides, we have the following additional lemma:
\begin{lemma}
	Given two policy $ \pi $ and $ \bar{\pi} $, where $ \pi=\frac{\exp(Q(\btau,\bu))}{\sum_{\bu'}\exp(Q(\btau,\bu'))} $ is defined by Boltzmann policy, we have:
	\begin{equation}
		\mathbb{E}_{\bu\sim\bar{\pi}}[Q(\btau,\bu)]-\mathbb{E}_{\bu\sim\pi}[Q(\btau,\bu)] \leq 1.
	\end{equation}	
	\label{lem:inequal}
\end{lemma}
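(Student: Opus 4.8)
The plan is to exploit the single distinguishing property of $\pi$: it is the Gibbs/Boltzmann distribution associated with $Q(\btau,\cdot)$, hence the maximizer of the entropy-regularized functional $\rho \mapsto \mathbb{E}_{\bu\sim\rho}[Q(\btau,\bu)] + H(\rho)$, where $H(\rho)=-\sum_{\bu}\rho(\bu)\log\rho(\bu)$. First I would record the elementary softmax identity $\mathbb{E}_{\bu\sim\pi}[Q(\btau,\bu)] = \log\sum_{\bu'}\exp(Q(\btau,\bu')) - H(\pi)$, which follows by substituting $\log\pi(\bu)=Q(\btau,\bu)-\log\sum_{\bu'}\exp(Q(\btau,\bu'))$ into the definition of $H(\pi)$. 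This rewrites the mean under the Boltzmann policy as a log-partition (``free energy'') term minus the policy entropy, which is the form best suited to the comparison against an arbitrary $\bar\pi$.

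Next I would bound the mean under $\bar\pi$ from above. Since $\bar\pi$ is a distribution over joint actions, $\mathbb{E}_{\bu\sim\bar\pi}[Q(\btau,\bu)] \le \max_{\bu} Q(\btau,\bu)$, and the maximum is dominated by the log-partition function, $\max_{\bu} Q(\btau,\bu) \le \log\sum_{\bu'}\exp(Q(\btau,\bu'))$, because $\exp(\max_{\bu}Q)\le \sum_{\bu'}\exp(Q)$. Subtracting the identity of the previous paragraph, the log-partition terms cancel and I obtain $\mathbb{E}_{\bu\sim\bar\pi}[Q]-\mathbb{E}_{\bu\sim\pi}[Q]\le H(\pi)$. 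Equivalently, and with a sharper residual, I can invoke the variational optimality of the Gibbs distribution directly: $\mathbb{E}_{\bar\pi}[Q]+H(\bar\pi)\le \mathbb{E}_{\pi}[Q]+H(\pi)$ yields the same inequality with right-hand side $H(\pi)-H(\bar\pi)$. Either route reduces the claim to controlling a single entropy-type quantity, so the bookkeeping here I expect to be routine.

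The hard part will be the final reduction of this entropy residual to the asserted constant $1$, which is where the normalization in the statement is genuinely pinned down. I would handle it under the mild regularity conditions already invoked for Theorem~\ref{theo:weight}: the cleanest sufficient route is the oscillation bound, noting that both $\mathbb{E}_{\bar\pi}[Q]$ and $\mathbb{E}_{\pi}[Q]$ lie in the interval $[\min_{\bu}Q(\btau,\bu),\,\max_{\bu}Q(\btau,\bu)]$, so their difference never exceeds the per-state range $\max_{\bu}Q-\min_{\bu}Q$, which the normalization of the action-value function keeps within a single unit. The subtlety I would flag explicitly is that the pure entropy bound $H(\pi)$ scales like $\log|\mathbf{U}|$ in general, so the constant $1$ must be justified either by the bounded-oscillation normalization above or by the fact that after projection $\pi$ concentrates on a small set of near-optimal actions, driving $H(\pi)-H(\bar\pi)$ down; I would therefore state the precise normalization assumption and check that each action's contribution $\pi(\bu)\,(\max_{\bu'}Q-Q(\bu))$, governed by $t\mapsto t\exp(-t)$, does not accumulate beyond the assumed range.
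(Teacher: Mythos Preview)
Your route is quite different from the paper's. The paper does not use the variational/entropy characterization at all: it restricts to the case of exactly two joint actions with values $s\le t$, bounds the gap by $t-\mathbb{E}_{\pi}[Q]=\frac{t-s}{1+e^{\,t-s}}$, maximizes the one-variable function $f(z)=z/(1+e^{z})$ over $z\ge 0$ to obtain $f(z_0)=z_0-1\approx 0.278<1$ (where $1+e^{z_0}=z_0e^{z_0}$), and then simply asserts that ``the derived inequality can also be applied to the situation where we have joint action more than two.'' Your approach, by contrast, cleanly yields the uniform bound $\mathbb{E}_{\bar\pi}[Q]-\mathbb{E}_{\pi}[Q]\le H(\pi)$ for any number of actions.

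The difficulty you flag in your last paragraph is real and is not a defect of your method: the constant $1$ cannot be obtained without an additional normalization, because the lemma as written is false once $|\mathbf{U}|$ is moderately large. Take $n=|\mathbf{U}|$ joint actions with $Q$-values $(-M,\dots,-M,0)$ and $M=\log(n-1)$. Then the partition function is $2$, the Boltzmann mean is $-\tfrac{1}{2}\log(n-1)$, and choosing $\bar\pi$ to be the point mass on the last action gives
\[
\mathbb{E}_{\bar\pi}[Q]-\mathbb{E}_{\pi}[Q]=\tfrac{1}{2}\log(n-1),
\]
which exceeds $1$ already for $n\ge 9$ and grows without bound. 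In a multiagent setting the joint action space has size $|U|^{n}$, so this regime is the typical one. Thus the paper's two-action calculation is correct, but its claimed extension to more actions is not; your entropy bound $H(\pi)$ (or the oscillation bound $\max_{\bu}Q-\min_{\bu}Q$) is the honest general statement, and any reduction to a fixed constant requires exactly the kind of normalization hypothesis you propose to add.
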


\begin{proof}
	Suppose there are two joint actions $ \bu $ and $ \bar{\bu} $. Let $ Q(\btau,\bu)=s $, $ Q(\btau,\bar{\bu})=t $ and let $ s \leq t $.
	\begin{equation*}
		\begin{aligned}
			\mathbb{E}_{\bu\sim\bar{\pi}}[Q(\btau,\bu)]-\mathbb{E}_{\bu\sim\pi}[Q(\btau,\bu)] &\leq t - \frac{se^s+te^t}{e^s+e^t} \\
			&= t - \frac{s+te^{t-s}}{1+e^{t-s}} \\
			&= t - s - \frac{(t-s)e^{t-s}}{1+e^{t-s}}.
		\end{aligned}
	\end{equation*}
	Let $ f(z)=z-\frac{ze^z}{1+e^z} $, the maximum point $ z_0 $ satisfies $ f'(z)=0 $, from which we further have $ 1 + e^{z_0} = z_0e^{z_0} $ where $ z_0\in(1,2) $. Therefore, we have
	\begin{equation*}
		\mathbb{E}_{\bu\sim\bar{\pi}}[Q(\btau,\bu)]-\mathbb{E}_{\bu\sim\pi}[Q(\btau,\bu)] \leq f(t-s) \leq z_0-1 \leq 1.
	\end{equation*}
	It is worth noting that the derived inequality can also be applied to the situation where we have joint action more than two or we consider the situation regarding per-agent action.
\end{proof}

The following lemma is introduced by~\cite{kakade2002approximately}. It was originally proposed for the finite MDP, while it will also hold for the continuous scenario that is given by Assumption~\ref{assum:metric} and \ref{assum:continuous}.
\begin{lemma}
	For any policy $ \pi $ and $ \tilde{\pi} $, we have
	\begin{equation}
		\eta(\tilde{\pi})-\eta(\pi)=\frac{1}{1-\gamma}\mathbb{E}_{d^{\tilde{\pi}}(\btau,\bu)}[A^\pi(\btau,\bu)],
	\end{equation}
	where $ A_\pi(\btau,\bu) $ is the advantage function given by $ A^\pi(\btau,\bu)=Q^\pi(\btau,\bu)-V^\pi(\btau) $.
	\label{lem:kakade}
\end{lemma}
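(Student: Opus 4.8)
The plan is to prove this performance-difference identity by the standard telescoping argument, adapted to the history-based (POMDP) notation used here. Rather than starting from the left-hand side, I would start from the discounted sum of advantages accumulated along trajectories generated by $\tilde\pi$, namely $\mathbb{E}_{\tilde\pi}\big[\sum_{t=0}^\infty \gamma^t A^\pi(\btau_t,\bu_t)\big]$, show that this quantity equals $\eta(\tilde\pi)-\eta(\pi)$, and then convert it into the normalized occupancy form, which is what produces the stated factor $\frac{1}{1-\gamma}$.

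First I would expand the advantage using its definition $A^\pi(\btau,\bu)=Q^\pi(\btau,\bu)-V^\pi(\btau)$ together with the one-step relation $Q^\pi(\btau,\bu)=\mathbb{E}_{\btau'}[r(s,\bu)+\gamma V^\pi(\btau')]$, so that, in expectation over trajectories drawn from $\tilde\pi$, each summand becomes $r(s_t,\bu_t)+\gamma V^\pi(\btau_{t+1})-V^\pi(\btau_t)$. The key observation is that the weighted value terms $\gamma^t(\gamma V^\pi(\btau_{t+1})-V^\pi(\btau_t))$ telescope: consecutive terms cancel, leaving $-V^\pi(\btau_0)$ together with a boundary term $\lim_{T\to\infty}\gamma^{T+1}V^\pi(\btau_{T+1})$. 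The reward part collapses to $\mathbb{E}_{\tilde\pi}[\sum_{t}\gamma^t r(s_t,\bu_t)]=\eta(\tilde\pi)$, while the surviving $-\mathbb{E}_{\btau_0}[V^\pi(\btau_0)]$ equals $-\eta(\pi)$, because both policies share the same initial-history distribution and $\mathbb{E}_{\btau_0}[V^\pi(\btau_0)]=\eta(\pi)$ by definition of the value function.

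To close the identity I would argue that the boundary term vanishes: under Assumptions~\ref{assum:metric}--\ref{assum:continuous} the rewards are bounded, hence $V^\pi$ is uniformly bounded and $\gamma^{T+1}V^\pi(\btau_{T+1})\to 0$ as $T\to\infty$ since $\gamma\in[0,1)$; the same boundedness justifies interchanging the infinite sum with the expectation via dominated convergence. Finally I would rewrite the discounted sum $\sum_{t=0}^\infty\gamma^t\,\mathbb{E}_{\btau_t,\bu_t\sim\tilde\pi}[A^\pi(\btau_t,\bu_t)]$ in terms of the normalized discounted history-action occupancy $d^{\tilde\pi}(\btau,\bu)=(1-\gamma)\sum_{t=0}^\infty\gamma^t\Pr(\btau_t=\btau,\bu_t=\bu)$, which inserts exactly the factor $\frac{1}{1-\gamma}$ and yields the claimed equality.

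The main obstacle I anticipate is not the algebra of the telescoping but the faithful translation of the argument into the partially observable, history-based setting: I need both the recursion $Q^\pi(\btau,\bu)=\mathbb{E}_{\btau'}[r(s,\bu)+\gamma V^\pi(\btau')]$ and the relation $\mathbb{E}_{\btau_0}[V^\pi(\btau_0)]=\eta(\pi)$ to hold with histories $\btau$ in place of states $s$ and the joint action $\bu$ in place of single actions. These follow from the definitions of $Q^\pi$, $V^\pi$, and $\eta$ in Section~\ref{sec:background}, but I would state them explicitly, noting (as the paper already does for the lemmas borrowed from~\cite{liu2021regret}) that substituting $\btau$ for $s$ and $\bu$ for individual actions preserves the recursion. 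The only remaining delicate point is the convergence and interchange of limits, which the compactness and continuity assumptions secure.
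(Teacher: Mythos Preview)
Your proposal is correct and follows the standard telescoping argument of \citet{kakade2002approximately}. Note, however, that the paper does not supply its own proof of this lemma: it simply attributes the result to \citet{kakade2002approximately} and remarks that the statement, originally for finite MDPs, extends to the continuous, history-based setting under Assumptions~\ref{assum:metric} and~\ref{assum:continuous}. Your write-up is precisely the derivation one would find behind that citation, so there is no divergence in approach to discuss.
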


\begin{lemma}
	Let $ \epsilon_{\pi_k} = \sup_{\btau,\bu} \sum_{t=1}^{\infty}\gamma^t\rho^\pi(\btau,\bu,t) $, the optimal solution $ p_k $ to a relaxation of optimization problem in~\eqref{eq:ap_obj_1} satisfies relationship as follows:
	\begin{equation}
		p_k(\btau,\bu)=\frac{1}{Z^*}(D_k(\btau,\bu)+\epsilon_k(\btau,\bu)),
	\end{equation}
	where when $ Q_k \le \mathcal{B}^*Q^*_{k-1} $, we have $ D_k(\btau,\bu)=d^{\pi_k}(\btau,\bu)(\mathcal{B}^*Q^*_{k-1}-Q_k)\exp(Q^*_{k-1}-Q_k)(\sum_{j=1}^{n}\frac{1-\pi^j}{f'_{s,Q^j}}-1) $, and when $ Q_k > \mathcal{B}^*Q^*_{k-1} $, we have $ D_k(\btau,\bu) = 0 $. $ Z^* $ is the normalization constant.
\end{lemma}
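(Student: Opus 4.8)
The plan is to follow exactly the four-step skeleton sketched after Theorem~\ref{theo:weight}, now carrying it out at the level of the $p_k$-formulation in~\eqref{eq:ap_obj_1}, so that the final claimed identity for $p_k$ (and hence $w_k = p_k/\mu$ in Theorem~\ref{theo:weight}) drops out of the KKT stationarity condition. First I would establish the relaxed objective: starting from $\eta(\pi^*)-\eta(\pi_k)$, apply Lemma~\ref{lem:kakade} (with $\tilde\pi = \pi^*$, $\pi = \pi_k$) to write the regret as $\frac{1}{1-\gamma}\mathbb{E}_{d^{\pi^*}(\btau,\bu)}[A^{\pi_k}(\btau,\bu)]$, then bound the advantage using the Boltzmann structure of the policies together with Lemma~\ref{lem:inequal} (which controls $\mathbb{E}_{\bar\pi}[Q]-\mathbb{E}_\pi[Q]\le 1$) and Lemma~\ref{lem:tvd} to swap $d^{\pi^*}$ for $d^{\pi_k}$ up to a total-variation term; this yields the two-term upper bound of the form in~\eqref{eq:obj_2}, now expressed in terms of $Q^*_{k-1}$ and $Q_k$. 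Applying Jensen's inequality with the convex $g(x)=\exp(-x)$ converts this into the $-\log\mathbb{E}[\exp(\cdot)]$ objective of~\eqref{eq:obj_3}, which has the same minimizer; the constraints $\sum_{\btau,\bu}p_k=1$, $p_k\ge 0$ carry over.

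Next I would form the Lagrangian $\mathcal{L}(p_k;\lambda,\nu)$ exactly as displayed in Step~2 of the sketch, and compute $\partial\mathcal{L}/\partial p_k$. This requires two gradients. The gradient $\partial Q_k/\partial p_k$ comes from implicitly differentiating the first-order optimality condition of the inner weighted least-squares problem $Q_k=\arg\min_{Q\in\mathcal{Q}}\mathbb{E}_{p_k}[(Q-\mathcal{B}^*Q^*_{k-1})^2]$: the stationarity condition $\mathrm{diag}(p_k)(Q_k-\mathcal{B}^*Q^*_{k-1})=0$ differentiated via the implicit function theorem (Assumption~\ref{assum:monotonic} gives the needed smoothness through $f_s$) gives $\partial Q_k/\partial p_k = -[\mathrm{diag}(p_k)]^{-1}[\mathrm{diag}(Q_k-\mathcal{B}^*Q^*_{k-1})]$, as stated in Step~3. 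The gradient $\partial d^{\pi_k}(\btau,\bu)/\partial p_k$ is handled by the chain rule through $\pi_k$: $d^{\pi_k}$ depends on $p_k$ only via the per-agent utilities $Q^a_k$ (hence the individual Boltzmann policies $\pi^a$), so one writes $\partial d^{\pi_k}/\partial p_k = \sum_a (\partial d^{\pi_k}/\partial \pi^a)(\partial \pi^a/\partial Q^a_k)(\partial Q^a_k/\partial Q_k)(\partial Q_k/\partial p_k)$, where $\partial Q^a_k/\partial Q_k = 1/f'_{s,Q^a}$ by inverting Lemma~\ref{lem:fprime}'s derivative, $\partial\pi^a/\partial Q^a_k$ produces the $\pi^a(1-\pi^a)$-type factors, and $\partial d^{\pi_k}/\partial\pi^a$ is the small quantity bounded in Lemma~\ref{lem:d_pi} — this is precisely the source of the negligible term $\epsilon_k(\btau,\bu)$ and the condition on it in the theorem statement.

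Finally, setting $\partial\mathcal{L}/\partial p_k = 0$: differentiating the two $-\log\mathbb{E}[\exp(\cdot)]$ terms brings down factors $\exp(Q_k-Q^*_{k-1})$ and $\exp(Q^*_{k-1}-Q_k)$ weighted by $d^{\pi_k}$, each multiplied by the chain of derivatives above; collecting terms gives an expression of the form $d^{\pi_k}(\btau,\bu)(\mathcal{B}^*Q^*_{k-1}-Q_k)\exp(Q^*_{k-1}-Q_k)\bigl(\sum_{j=1}^n \frac{1-\pi^j}{f'_{s,Q^j}}-1\bigr) + \epsilon_k(\btau,\bu) = \lambda - \nu(\btau,\bu)$, where $\nu$ absorbs $p_k$ itself into $\lambda$ via stationarity. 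Using the KKT complementary slackness on $\nu\T p_k = 0$: when $Q_k > \mathcal{B}^*Q^*_{k-1}$ the Bellman-error factor would be negative, forcing the corresponding $p_k=0$ (active nonnegativity constraint), giving $D_k=0$; when $Q_k \le \mathcal{B}^*Q^*_{k-1}$ the constraint is inactive, $\nu=0$, and solving for $p_k$ yields $p_k = \frac{1}{Z^*}(D_k+\epsilon_k)$ with $Z^*=\lambda$ the normalization constant fixed by $\sum p_k=1$. I expect the main obstacle to be Step~3: rigorously justifying the implicit-function-theorem application (the inner minimizer lies in the constrained monotonic class $\mathcal{Q}$, not a free Euclidean space, so one must argue the relevant Jacobian is invertible on the manifold parametrized by the mixing-network weights) and carefully tracking the chain rule through $d^{\pi_k}$ so that the residual is genuinely captured by Lemma~\ref{lem:d_pi}'s bound and collapses into a single controllable $\epsilon_k$ term.
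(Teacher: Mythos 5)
Your overall architecture is the same as the paper's: relax the regret via the performance-difference lemma plus a Boltzmann bound, relax further with Jensen's inequality using $g(x)=\exp(-x)$, form the Lagrangian over $p_k$, compute $\partial Q_k/\partial p_k$ by the implicit function theorem and $\partial d^{\pi_k}/\partial p_k$ by the chain rule through the Boltzmann policies and the inverse mixing-network gradient, and finish with KKT stationarity and complementary slackness. Steps 2--4 of your plan match the paper's derivation essentially term for term, including the origin of the $(\mathcal{B}^*Q^*_{k-1}-Q_k)$ factor from the IFT, the $\sum_{j}(1-\pi^j)/f'_{s,Q^j}$ factor from the chain rule, and the case split driven by the sign of the Bellman error. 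Your worry about justifying the IFT on the constrained class $\mathcal{Q}$ is legitimate; the paper applies it formally without addressing this either.

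The one step that would fail as written is your application of Lemma~\ref{lem:kakade}. You take $\tilde\pi=\pi^*$, $\pi=\pi_k$, writing the regret as $\frac{1}{1-\gamma}\mathbb{E}_{d^{\pi^*}(\btau,\bu)}[A^{\pi_k}(\btau,\bu)]$, and then propose to swap $d^{\pi^*}$ for $d^{\pi_k}$ via Lemma~\ref{lem:tvd}. That swap costs a term of order $D(d^{\pi^*},d^{\pi_k})$, which is not small (early in training $\pi_k$ is far from $\pi^*$) and is not absorbed by the $\epsilon_k$ in the statement: that $\epsilon_k$ arises solely from $\partial d^{\pi_k}(\btau)/\partial \pi_k$ being small (Lemma~\ref{lem:d_pi}), i.e., from the low probability of revisiting a history, not from closeness of the two occupancy measures. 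The paper sidesteps this by the opposite assignment, $\pi=\pi^*$ and $\tilde\pi=\pi_k$, so that the regret equals $-\frac{1}{1-\gamma}\mathbb{E}_{d^{\pi_k}(\btau,\bu)}[A^{\pi^*}(\btau,\bu)]=\frac{1}{1-\gamma}\mathbb{E}_{d^{\pi_k}(\btau,\bu)}[V^*(\btau)-Q^*(\btau,\bu)]$; inserting $\pm Q_k(\btau,\bu^*)$ and $\pm Q_k(\btau,\bu)$ and applying Lemma~\ref{lem:inequal} once then yields the two-term bound already under $d^{\pi_k}$, at the cost of only an additive constant. A second, smaller imprecision: you describe $\partial d^{\pi_k}/\partial\pi^a$ wholesale as ``the small quantity of Lemma~\ref{lem:d_pi}''; if that were so, the entire main term $D_k$ would vanish. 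Only the derivative of the history marginal $d^{\pi_k}(\btau)$ is small; the product rule on $d^{\pi_k}(\btau,\bu)=d^{\pi_k}(\btau)\pi_k(\bu|\btau)$ also produces the non-negligible $d^{\pi_k}(\btau)\,\pi_k(1-\pi_k)$ piece that survives into $D_k$, while the small part becomes $\epsilon_k$.
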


\begin{proof}
	Suppose $ \bu^* \sim \pi^* $. Let $ \pi=\pi^* $ and $ \tilde{\pi}=\pi_k $ in Lemma~\ref{lem:kakade}, we have
	\begin{equation}
		\begin{aligned}
			&\eta(\pi^*)-\eta(\pi_k) \\
			&=-\frac{1}{1-\gamma}\mathbb{E}_{d^{\pi_k}(\btau,\bu)}[A^{\pi^*}(\btau,\bu)] \\
			&=\frac{1}{1-\gamma}\mathbb{E}_{d^{\pi_k}(\btau,\bu)}[V^*(\btau)-Q^*(\btau,\bu)] \\
			&=\frac{1}{1-\gamma}\mathbb{E}_{d^{\pi_k}(\btau,\bu)}[V^*(\btau)-Q_k(\btau,\bu^*)+Q_k(\btau,\bu^*)-Q_k(\btau,\bu)+Q_k(\btau,\bu)-Q^*(\btau,\bu)]\\
			&\stackrel{(a)}{\leq}\frac{1}{1-\gamma}\left[\mathbb{E}_{d^{\pi_k}(\btau)}(Q^*(\btau,\bu^*)-Q_k(\btau,\bu^*))+\mathbb{E}_{d^{\pi_k}(\btau,\bu)}(Q_k(\btau,\bu)-Q^*(\btau,\bu))+1\right],
		\end{aligned}
	\label{eq:upper}
	\end{equation}
	where (a) uses Lemma~\ref{lem:inequal}.
\end{proof}

Since the original optimization is non-tractable, we consider this upper bound to obtain a closed-form solution. Therefore, we replace the objective in~\eqref{eq:ap_obj_1} with the upper bound in~\eqref{eq:upper} and solve the relaxed optimization problem, given by
\begin{equation}
	\begin{aligned}
		\min_{p_k} \quad &\mathbb{E}_{d^{\pi_k}(\btau)}[(Q^*_{k-1}-Q_k)(\btau,\mathbf{u}^*)]+\mathbb{E}_{d^{\pi_k}(\btau,\mathbf{u})}[(Q_k-Q^*_{k-1})(\btau,\mathbf{u})] \\
		\textrm{s.t.} \quad &Q_k=\arg\min_{Q \in \mathcal{Q}} \mathbb{E}_{p_k}[(Q-\mathcal{B}^*Q^*_{k-1})^2(\btau,\bu)], \\
		&\sum_{\btau,\bu}p_k(\btau,\bu)=1, \quad p_k(\btau,\bu) \ge 0, \\
		&Q_k(\btau,\bu)=f_s(Q^1(\tau_1,u_1),\dots,Q^n(\tau_n,u_n)),
	\end{aligned}
	\label{eq:ap_obj_2}
\end{equation}

The derived objective in~\eqref{eq:ap_obj_2} can be further relaxed with Jensen's inequality, given by:
\begin{equation}
	\mathbb{E}[g(X)] \ge g(\mathbb{E}[X]),
	\label{eq:jensen}
\end{equation}
when $ g(x) $ is a convex function on real space $ \mathbb{R} $.

According to~\eqref{eq:jensen}, we select the convex function $ g(x)=\exp(-x) $, and the objective can be further relaxed as:
\begin{equation}
	\begin{aligned}
		\min_{p_k} \quad &-\log\mathbb{E}_{d^{\pi_k}(\btau)}[\exp(Q_k-Q^*_{k-1})(\btau,\mathbf{u}^*)] - \log\mathbb{E}_{d^{\pi_k}(\btau,\mathbf{u})}[\exp(Q^*_{k-1}-Q_k)(\btau,\mathbf{u})] \\
		\textrm{s.t.} \quad &Q_k=\arg\min_{Q \in \mathcal{Q}} \mathbb{E}_{p_k}[(Q-\mathcal{B}^*Q^*_{k-1})^2(\btau,\bu)], \\
		&\sum_{\btau,\bu}p_k(\btau,\bu)=1, \quad p_k(\btau,\bu) \ge 0, \\
		&Q_k(\btau,\bu)=f_s(Q^1(\tau_1,u_1),\dots,Q^n(\tau_n,u_n)),
	\end{aligned}
	\label{eq:ap_obj_3}
\end{equation}

In order to handle the optimization problem in~\eqref{eq:ap_obj_3}, we follow the standard procedures of Lagrangian multiplier method, which is:
\begin{equation}
	\mathcal{L}(p_k;\lambda,\nu)=-\log\mathbb{E}_{d^{\pi_k}(\btau)}[\exp(Q_k-Q^*_{k-1})(\btau,\mathbf{u}^*)] -\log\mathbb{E}_{d^{\pi_k}(\btau,\mathbf{u})}[\exp(Q^*_{k-1}-Q_k)(\btau,\mathbf{u})] +\lambda(\sum_{\btau,\mathbf{u}}p_k-1)-\nu\T p_k,
\end{equation}

After constructing the Lagrangian, we further compute some gradients that will be used in calculating the optimal solution. We first calculate the $ \frac{\p Q_k}{\p p_k} $ according to the implicit function theorem (IFT). Based on the first constraint in~\eqref{eq:ap_obj_3}, we aim to find the minimum $ Q_k $ to satisfy the $ \arg\min(\cdot) $, and therefore we need to ensure the derivative of the term inside $ \arg\min(\cdot) $ (we use $ f(p_k,Q_k) $ to denote this term) to be zero, which is:
\begin{equation}
	f_{Q_k}'=2\sum_{\btau,\bu}p_k(Q_k-\mathcal{B}^*Q_{k-1})=0
\end{equation}

We can notice that $ F(p_k,Q_k):f_{Q_k}'=0 $ is an implicit function regarding $ Q_k $ and $ p_k $. Hence, we apply the IFT on the $ F(p_k,Q_k) $ considering the Hessian matrices of $ p_k $ and $ Q_k $ in $ f(p_k,Q_k) $ as follows:
\begin{equation}
	\frac{\p Q_k}{\p p_k}=-\frac{F_{p_k}'}{F_{Q_k}'}=-\left[\mathrm{diag}(p_k)\right]^{-1}\left[\mathrm{diag}(Q_k-\mathcal{B}^*Q^*_{k-1})\right].
	\label{eq:ift}
\end{equation}

Next, we derive the expression for $ \frac{\p d^{\pi_k}(\btau,\bu)}{\p p_k} $ in the following equation:
\begin{equation}
	\begin{aligned}
		\frac{\p d^{\pi_k}(\btau,\bu)}{\p p_k}&=\frac{\p d^{\pi_k}(\btau,\bu)}{\p \pi_k}\frac{\p \pi_k}{\p Q^a}\frac{\p Q^a}{\p Q_k}\frac{\p Q_k}{\p p_k} \\
		&=\mathrm{diag}(d^{\pi_k}(\btau)+\epsilon_0(\btau))\frac{\p \pi_k}{\p Q^a}\frac{\p Q^a}{\p Q_k}\frac{\p Q_k}{\p p_k} \\
		&\stackrel{(b)}{=}\mathrm{diag}(d^{\pi_k}(\btau)+\epsilon_0(\btau))\mathrm{diag}(\pi_k(1-\pi_k))\frac{\p Q^a}{\p Q_k}\frac{\p Q_k}{\p p_k} \\
		&\stackrel{(c)}{=}d^{\pi_k}(\btau,\bu)(1-\pi_k)\frac{1}{f_{s,Q_k}'}\frac{\p Q_k}{\p p_k}+\epsilon_0(\btau)\pi_k(1-\pi_k)\frac{1}{f_{s,Q_k}'}\frac{\p Q_k}{\p p_k},
	\end{aligned}
	\label{eq:part_gard}
\end{equation}
where $ \epsilon_0(\btau)=\frac{\p d^{\pi_k}(\btau,\bu)}{\p \pi_k(\btau)} $ is a small quantity provided by Lemma~\ref{lem:d_pi}. Besides, (b) is based on the the definition of the Boltzmann policy and Assumption~\ref{assum:policy}, and (c) is based on Assumption~\ref{assum:monotonic} the gradient of the monotonic mixing function in Lemma~\ref{lem:fprime}.

Since we have all the preparations ready, we now compute the Lagrangian by applying the Karush–Kuhn–Tucker (KKT) condition. We let the Lagrangian gradient to be zero, i.e.,
\begin{equation}
	\frac{\p \mathcal{L}(p_k;\lambda,\nu)}{\p p_k} = 0
	\label{eq:kkt_1}
\end{equation} 

Besides, the partial derivative of the Lagrangian can be computed as:
\begin{equation}
	\begin{aligned}
		\frac{\p \mathcal{L}(p_k;\lambda,\nu)}{\p p_k}&=-\frac{\p \log\mathbb{E}_{d^{\pi_k}(\btau)}[\exp(Q_k-Q^*_{k-1})(\btau,\mathbf{u}^*)]}{\p p_k} -\frac{\p \log\mathbb{E}_{d^{\pi_k}(\btau,\mathbf{u})}[\exp(Q^*_{k-1}-Q_k)(\btau,\mathbf{u})]}{\p p_k} +\lambda-\nu_{\btau,\bu} \\
		&=-\frac{1}{Z}\exp(Q^*_{k-1}-Q_k)\left(\frac{\p d^{\pi_k}(\btau,\bu)}{\p p_k}-d^{\pi_k}(\btau,\bu)\frac{\p Q_k}{\p p_k}\right)+\lambda-\nu_{\btau,\bu},
	\end{aligned}
	\label{eq:kkt_2}
\end{equation}
where $ Z=\mathbb{E}_{\btau',\bu'\sim d^{\pi_k}(\btau,\bu)}\exp(Q^*-Q_k)(\btau',\bu') $.

Based on~\eqref{eq:kkt_1} and~\eqref{eq:kkt_2}, and substituting the expression of $ \frac{\p Q_k}{\p p_k} $ and $ \frac{\p d^{\pi_k}(\btau,a)}{\p p_k} $ with the derived results in~\eqref{eq:ift} and~\eqref{eq:part_gard}, we obtain:
\begin{equation}
	\begin{aligned}
		p_k(\btau,\bu)=&\frac{1}{Z(\nu_{\btau,\bu}^*-\lambda^*)}\left[d^{\pi_k}(\btau,\bu)(Q_k-\mathcal{B}^*Q^*_{k-1})\exp(Q^*_{k-1}-Q_k)\left(\sum_{j=1}^{n}\frac{1-\pi^j}{f'_{s,Q^j}}-1\right)\right.  \\
		&\left.+\epsilon_0\pi_k(Q_k-\mathcal{B}^*Q^*_{k-1})\exp(Q^*_{k-1}-Q_k)\sum_{j=1}^{n}\frac{1-\pi^j}{f'_{s,Q^j}}\right],
	\end{aligned}
	\label{eq:kkt_3}
\end{equation}

According to Lemma~\ref{lem:d_pi}, the value of $ \epsilon_0 $ is smaller than $ d^{\pi_k}(\btau) $ so the second term will not influence the sign of the equation, and \eqref{eq:kkt_3} will always be larger or equal to zero. By KKT condition, when the $ Q_k-\mathcal{B}^*Q^*_{k-1} < 0 $, we have $ \nu_{\btau,\bu}^*=0 $. When~\eqref{eq:kkt_3} equal to zero, we let $ \nu_{\btau,\bu}^*=0 $ because the value of $ \nu_{\btau,\bu}^* $ will not affect $ p_k $. In the contrast, when the $ Q_k-\mathcal{B}^*Q^*_{k-1} > 0 $, the $ p_k $ should equal to zero. Therefore, by introducing a normalization factor $ Z^* $, \eqref{eq:kkt_3} can be simplify as follows:
\begin{equation}
	p_k(\btau,\bu)=\frac{1}{Z^*}(D_k(\btau,\mathbf{u})+\epsilon_k(\btau,\mathbf{u})),
\end{equation}
where when $ Q_k \leq \mathcal{B}^*Q^*_{k-1} $, we have
\begin{equation}
	\begin{aligned}
		&D_k(\btau,\bu)=d^{\pi_k}(\btau,\bu)(\mathcal{B}^*Q^*_{k-1}-Q_k)\exp(Q^*_{k-1}-Q_k)\left(\sum_{j=1}^{n}\frac{1-\pi^j}{f'_{s,Q^j}}-1\right) \\
		&\epsilon_k=\epsilon_0\pi_k(Q_k-\mathcal{B}^*Q^*_{k-1})\exp(Q^*_{k-1}-Q_k)\sum_{j=1}^{n}\frac{1-\pi^j}{f'_{s,Q^j}}
	\end{aligned}
\end{equation}
and when $ Q_k > \mathcal{B}^*Q^*_{k-1} $, we have
\begin{equation}
	\begin{aligned}
		&D_k(\btau,\bu) = 0 \\
		&\epsilon_k = 0
	\end{aligned}
\end{equation}


This concludes the proof.

\subsection{Environment Details}
\label{exp:env}
We  use more recent baselines (i.e., FOP and DOP) that are known to outperform QTRAN \citep{son2019qtran} and QPLEX \citep{wang2020qplex} in the evaluation. In general, we tend to choose baselines that are more closely related to our work and most recent. This motivated the choice of QMIX (baseline for value-based factorization methods), WQMIX (close to our work that uses weighted projections so better joint actions can be emphasized), VDAC \citep{su2021value}, FOP \citep{zhang2021fop}, DOP \citep{wang2020dop} (SOTA actor-critic based methods). We acquired the results of QMIX, WQMIX based on their hyper-parameter tuned versions from pymarl2\citep{hu2021riit} and implemented our algorithm based on it.

\subsubsection{Predator-Prey}
A partially observable environment on a  grid-world predator-prey task is used to model relative overgeneralization problem \citep{bohmer2020deep} where 8 agents have to catch 8 prey in a 10 × 10 grid. Each agent can either move in one of the 4 compass directions, remain still, or try to
catch any adjacent prey. Impossible actions, i.e., moving into an occupied target position or catching when there is no adjacent prey, are treated as unavailable. If two adjacent agents execute the catch action, a prey is caught and both the prey and the catching agents are removed from the grid. An agent’s observation is a 5 × 5 sub-grid centered around it, with one channel showing agents and another indicating prey.  An episode ends if all agents have been removed or after 200 steps. Capturing a prey is rewarded with r = 10, but unsuccessful attempts by single agents are punished by a negative reward p. In this paper, we consider two sets of experiments with $p$ = (0, -0.5, -1.5, -2). The task is similar to the matrix game proposed by \cite{son2019qtran} but significantly more complex, both in terms of the optimal policy and in the number of agents. 

\subsubsection{SMAC}
For the experiments on StarCraft II micromanagement, we follow the setup of SMAC \citep{samvelyan2019starcraft} with open-source implementation including QMIX \citep{rashid2018qmix}, WQMIX \citep{rashid2020weighted}, QPLEX \citep{wang2020qplex}, FOP \citep{zhang2021fop}, DOP \citep{wang2020dop} and VDAC \citep{su2021value}. We consider combat scenarios where the enemy units are controlled by the StarCraft II built-in AI and the friendly units are controlled by the algorithm-trained agent. The possible options for built-in AI difficulties are Very Easy, Easy, Medium, Hard, Very Hard, and Insane, ranging from 0 to 7. We carry out the experiments with ally units controlled by a learning agent while built-in AI controls the enemy units with difficulty = 7 (Insane). Depending on the specific scenarios(maps), the units of the enemy and friendly can be symmetric or asymmetric. At each time step each agent chooses one action from discrete action space, including noop, move[direction], attack[enemy\_id], and stop. Dead units can only choose noop action. Killing an enemy unit will result in a reward of 10 while winning by eliminating all enemy units will result in a reward of 200. The global state information is only available in the centralized critic. Each baseline algorithm is trained with 4 random seeds and evaluated every 10k training steps with 32 testing episodes for main results, and with 3 random seeds for ablation results and additional results.

\subsubsection{Implementation details and Hyperparameters}

\begin{table}
	\centering
	\caption{Hyperparameter value settings.}
	\label{tab:hyper}
	\begin{tabular}{l|c}
		\hline
		Hyperparameter & Value \\
		\hline 
		Batch size & 128 \\
		Replay buffer size & 10000 \\
		Target network update interval & Every 200 episodes \\
		Learning rate & 0.001 \\
		TD-lambda & 0.6 \\
		\hline
	\end{tabular}
\end{table}

In this section, we introduce the implementation details and hyperparameters we used in the experiment. We carried out the experiments on NVIDIA 2080Ti with fixed hyperparameter settings. Recently \cite{hu2021riit} demonstrated that MARL algorithms are significantly influenced by code-level optimization and other tricks, e.g. using TD-lambda, Adam optimizer, and grid-searched/Bayesian optimized~\citep{mei2022bayesian} hyperparameters (where many state-of-the-art are already adopted), and proposed fine-tuned QMIX and WQMIX, which is demonstrated with significant improvements from their original implementation. We implemented our algorithm based on its open-sourced codebase and acquired the results of QMIX and WQMIX from it. 

We use one set of hyperparameters for each environment, i.e., no tuned hyperparameters for individual maps. We use epsilon greedy for action selection with annealing from $\epsilon$ = 0.995 decreasing to $\epsilon$ = 0.05 in 100000 training steps in a linear way. The performance for each algorithm is evaluated for 32 episodes every 1000 training steps. More hyperparameter values are given in Table~\ref{tab:hyper}.

\end{document}